\theoremstyle{definition}
\newtheorem{theorem}{Theorem}
\newtheorem{lemma}[theorem]{Lemma}
\newtheorem{definition}[theorem]{Definition}
\newtheorem{problem}[theorem]{Problem}
\newcommand{\argmin}{\operatornamewithlimits{argmin}}
\newcommand%
  \def\xs{5}%
  \def\ys{]}%
  \def\key{%
  \def\xs{#2}%
  \def\ys{#3}%
  \def\key{#4}%
  \def\leg{\usepackage}%
  \input{#1}%
}%
  \def\leg{\usepackage}%
\title{Data Cleansing for Models Trained with SGD}
\author[1]{Satoshi Hara\thanks{\texttt{satohara@ar.sanken.osaka-u.ac.jp}}}
\author[2]{Atsushi Nitanda\thanks{\texttt{nitanda@mist.i.u-tokyo.ac.jp}}}
\author[3]{Takanori Maehara\thanks{\texttt{takanori.maehara@riken.jp}}}
\affil[1]{Osaka University, Japan}
\affil[2]{The University of Tokyo, Japan}
\affil[3]{RIKEN AIP, Japan}
\date{}
\begin{document}

\maketitle

%%%%%%%%%%%%%%%%%%%%%%%%%%%%%%%%%%%%%%%%%%%%%%%%%%%%%%%%%%%%%%%%%%%%
\begin{abstract}
Data cleansing is a typical approach used to improve the accuracy of machine learning models, which, however, requires extensive domain knowledge to identify the influential instances that affect the models.
In this paper, we propose an algorithm that can suggest influential instances without using any domain knowledge.
With the proposed method, users only need to inspect the instances suggested by the algorithm, implying that users do not need extensive knowledge for this procedure, which enables even non-experts to conduct data cleansing and improve the model.
The existing methods require the loss function to be convex and an optimal model to be obtained, which is not always the case in modern machine learning.
To overcome these limitations, we propose a novel approach specifically designed for the models trained with stochastic gradient descent (SGD).
The proposed method infers the influential instances by retracing the steps of the SGD while incorporating intermediate models computed in each step.
Through experiments, we demonstrate that the proposed method can accurately infer the influential instances.
Moreover, we used MNIST and CIFAR10 to show that the models can be effectively improved by removing the influential instances suggested by the proposed method.
\end{abstract}

%%%%%%%%%%%%%%%%%%%%%%%%%%%%%%%%%%%%%%%%%%%%%%%%%%%%%%%%%%%%%%%%%%%%
\section{Introduction}

Building accurate models is one of the fundamental goals in machine learning.
If the obtained model is not satisfactory, users try to improve the model in several ways such as by modifying input features, cleansing data, or even by gathering additional data.
Error analysis~\citep{ng2017machine} is a typical approach for this purpose.
In this analysis, the users hypothesize the cause of model's failure by investigating important features or examining the misclassified instances.
However, a good hypothesis requires experience and domain knowledge.
Therefore, it is difficult for non-domain experts or non-machine learning specialists to build accurate models.

How can we help non-experts to build accurate machine learning models?
In this study, we focus on the following data cleansing problem that removes ``harmful'' instances from the training set.
\begin{problem}[Data Cleansing]
Find a subset of the training instances such that the trained model obtained after removing the subset has a better accuracy.
\end{problem}
Currently, the users hypothesize the training instances that can have certain influences on the resulting models by inspecting instances based on the domain knowledge.
Our aim is to develop an algorithm that suggests influential instances \emph{without using any domain knowledge}.
With such an algorithm, the users do not need to hypothesize influential instances.
Instead, they only need to inspect the instances suggested by the algorithm.
If some of the suggested instances are deemed to be inappropriate, such as the instances that are irrelevant to the targeting tasks, the users can merely remove them.
Hence, with this process, even non-experts can improve the models.

For data cleansing, we need to determine the training instances that affect the model.
In the literature of statistics, an \emph{influential instance} is defined as the instance that leads to a distinct model from the current model if the corresponding instance is absent~\citep{cook1977detection}.
A naive approach to determine these influential instances is, therefore, to retrain the model by leaving every one instance out of the training set, which can be computationally very demanding.
To efficiently infer an influential instance without retraining, the convexity of the loss function plays an important role.
Pioneering studies by \cite{beckman1974distribution}, \cite{cook1977detection}, and \cite{pregibon1981logistic} have shown that, for some convex loss functions, the influential instances can be inferred without model retraining by utilizing the optimality condition on the training loss, given that an optimal model is obtained.
A recent study by \cite{koh2017understanding} further generalized these approaches to any smooth and strongly convex loss functions by incorporating the idea of influence function~\citep{cook1980characterizations} in robust statistics.

The focus of this study is to go beyond the convexity and optimality.
We aim to develop an algorithm that can infer influential instances even for non-convex objectives such as deep neural networks.
To this end, we propose a completely different approach to infer the influential instances.
The proposed approach is based on the stochastic gradient descent (SGD).
Modern machine learning models including deep neural networks are trained using SGD and its variants.
Our idea is to redefine the notion of influence for the models trained with SGD, which we named \emph{SGD-influence}.
Based on SGD-influence, we propose a method that infers the influential instances without model retraining.
The proposed method is based solely on the analysis of SGD.
Different from the existing methods, the proposed method does not require the optimality conditions to hold true on the obtained models.
The proposed method is therefore suitable to the SGD context where we no longer look for the exact optimum of the training loss.
In SGD, we instead look for the minimum error on the validation set, which leads to early stopping of the optimization that can violate the optimality condition.

In summary, the contribution of this study is threefold.
\begin{itemize}
	\item We propose a new definition of the influence, which we name as \emph{SGD-influence}, for the models trained with SGD. SGD-influence is defined based on the counterfactual effect: what if an instance is absent in SGD, how largely will the resulting model change?
	\item We propose a novel estimator of SGD-influence based on the analysis of SGD. We then construct a proposed influence estimation algorithm based on this estimator. We also study the estimation error of the proposed estimator on both convex and non-convex loss functions.
	\item Through experiments, we demonstrate that the proposed method can accurately infer the influential instances. Moreover, we used MNIST and CIFAR10 to show that the models can be effectively improved by removing the influential instances suggested by the proposed method.
\end{itemize}

\paragraph{Notations}
For vectors $a, b \in \mathbb{R}^p$, we denote the inner product by $\langle a, b \rangle = \sum_{i=1}^p a_i b_i$, and the norm by $\|a\| = \sqrt{\langle a, a \rangle}$.
For a function $f(\theta)$ with $\theta \in \mathbb{R}^p$, we denote its derivative by $\nabla_\theta f(\theta)$.

%%%%%%%%%%%%%%%%%%%%%%%%%%%%%%%%%%%%%%%%%%%%%%%%%%%%%%%%%%%%%%%%%%%%
\section{Preliminaries}

Let $z = (x, y) \in \mathbb{R}^d \times \mathcal{Y}$ be an observation, which is a pair of $d$-dimensional input feature vector $x$ and output $y$ in a certain domain $\mathcal{Y}$ (e.g., $\mathcal{Y} = \mathbb{R}$ for regression, and $\mathcal{Y} = \{-1, 1\}$ for binary classification).
The objective of learning is to find a model $f(x; \theta)$ that well approximates the output as $y \approx f(x; \theta)$.
Here, $\theta \in \mathbb{R}^p$ is a parameter of the model.

Let $D := \{z_n = (x_n, y_n)\}_{n=1}^N$ be a training set with independent and identically distributed observations.
We denote the loss function for an instance $z$ with the parameter $\theta$ by $\ell(z; \theta)$.
The learning problem is then denoted as
\begin{align}
    \hat{\theta} = \argmin_{\theta \in \mathbb{R}^p} \frac{1}{N} \sum_{n=1}^N \ell(z_n; \theta) .
    \label{eq:obj}
\end{align}

\paragraph{SGD}
Let $g(z; \theta) := \nabla_\theta \ell(z; \theta)$.
SGD starts the optimization from the initial parameter $\theta^{[1]}$.
An update rule of the mini-batch SGD at the $t$-th step for the problem~(\ref{eq:obj}) is given by $\theta^{[t+1]} \leftarrow \theta^{[t]} -  \frac{\eta_t}{|S_t|} \sum_{i \in S_t} g(z_i; \theta^{[t]})$, where $S_t$ denotes the set of instance indices used in the $t$-th step, and $\eta_t > 0$ is the learning rate.
We denote the number of total SGD steps by $T$.

%%%%%%%%%%%%%%%%%%%%%%%%%%%%%%%%%%%%%%%%%%%%%%%%%%%%%%%%%%%%%%%%%%%%
\section{SGD-Influence}

We propose a novel notion of influence for the models trained with SGD, which we name as \emph{SGD-influence}.
We then formalize the influence estimation problem we consider in this paper.

We define SGD-influence based on the following \emph{counterfactual} SGD where one instance is absent.
\begin{definition}[Counterfactual SGD]
The counterfactual SGD starts the optimization from the same initial parameter as the ordinary SGD $\theta_{-j}^{[1]} = \theta^{[1]}$.
The $t$-th step of the counterfactual SGD with the $j$-th instance $z_j$ absent is defined by $\theta_{-j}^{[t+1]} \leftarrow \theta_{-j}^{[t]} - \frac{\eta_t}{|S_t|} \sum_{i \in S_t \setminus \{j\}} g(z_i; \theta_{-j}^{[t]})$.
\end{definition}
\begin{definition}[SGD-Influence]
We refer to the parameter difference $\theta_{-j}^{[t]} - \theta^{[t]}$ as the \emph{SGD-influence} of the instance $z_j \in D$ at step $t$.
\end{definition}
It should be noted that SGD-influence can be defined in every step of SGD, even for non-optimal models.
Thus, SGD-influence is a suitable notion of influence for the cases where we no longer look for the exact optimal of (\ref{eq:obj}).
In this study, we specifically focus on estimating an inner product of a query vector $u \in \mathbb{R}^p$ and the SGD-influence after $T$ SGD steps, as follows.
\begin{problem}[Linear Influence Estimation (LIE)]
For a given query vector $u \in \mathbb{R}^p$, estimate the \emph{linear influence} $L_{-j}^{[T]}(u) := \langle u, \theta_{-j}^{[T]} - \theta^{[T]} \rangle$.
\end{problem}
LIE includes several important applications~\citep{koh2017understanding}.
If we take $u$ as the derivative of the prediction function $f$ for an input $x$, i.e., $u = \nabla_{\theta} f(x; \theta^{[T]})$, LIE amounts to estimating the change of the predicted output for $x$, based on the first-order Taylor approximation $L_{-j}^{[T]}(\nabla_{\theta} f(x; \theta)) \approx f(x; \theta_{-j}^{[T]}) - f(x; \theta^{[T]})$.
For example, in logistic regression, by taking $f$ as the logit function, positive linear influence $L_{-j}^{[T]}(\nabla_{\theta} f(x; \theta^{[T]}))$ for the negatively predicted input $x$ indicates that the prediction can be changed to be positive by removing the instance $z_j$.

Another important application is the influence estimation on the loss.
If we take $u = \nabla_{\theta} \ell(x; \theta^{[T]})$ for an input $x$, LIE amounts to estimating the change in loss $L_{-j}^{[T]}(\nabla_{\theta} \ell(x; \theta^{[T]})) \approx \ell(x; \theta_{-j}^{[T]}) - \ell(x; \theta^{[T]})$.
Negative $L_{-j}^{[T]}(\nabla_{\theta} \ell(x; \theta^{[T]}))$ indicates that the loss on the input $x$ can be decreased by removing $z_j$.

Note that SGD-influence as well as linear influence can be computed exactly by running the counterfactual SGD for all $z_j \in D$.
However, this requires running SGD $N$ times, which is computationally demanding even for $N \approx 100$.
Therefore, our goal is to develop an estimation algorithm for LIE, which does not require running SGD multiple times.

%%%%%%%%%%%%%%%%%%%%%%%%%%%%%%%%%%%%%%%%%%%%%%%%%%%%%%%%%%%%%%%%%%%%
\section{Estimating SGD-Influence}
\label{sec:est}

In this section, we present our proposed estimator of SGD-influence and show its theoretical properties.
We then derive an algorithm for LIE based on the estimator in the next section.

%%%%%%%%%%%%%%%%%%%%%%%%%%%%%%%%%%%%%%%%%%%%%%%%%%%%%%%%%%%%%%%%%%%%
\subsection{Proposed Estimator}

We estimate SGD-influence using the first-order Taylor approximation of the gradient.
Here, we assume that the loss function $\ell(z;\theta)$ is twice differentiable.
We then obtain $\frac{1}{|S_t|}\sum_{i \in S_t} \left(\nabla_\theta \ell(z_i; \theta_{-j}^{[t]}) - \nabla_\theta \ell(z_i; \theta^{[t]})\right) \approx H^{[t]} (\theta_{-j}^{[t]} - \theta^{[t]})$, where $H^{[t]} := \frac{1}{|S_t|} \sum_{i \in S_t} \nabla_\theta^2 \ell(z_i; \theta^{[t]})$ is the Hessian of the loss on the mini-batch $S_t$.
With this approximation, denoting an identity matrix by $I$, we have
\begin{align*}
    \theta_{-j}^{[t]} - \theta^{[t]} & = (\theta_{-j}^{[t-1]} - \theta^{[t-1]}) - \frac{\eta_{t-1}}{|S_{t-1}|} \sum_{i \in S_{t-1}} (\nabla_\theta \ell(z_i; \theta_{-j}^{[t-1]}) - \nabla_\theta \ell(z_i; \theta^{[t-1]})) \\
    & \approx (I - \eta_{t-1} H^{[t-1]})  (\theta_{-j}^{[t-1]} - \theta^{[t-1]}) .
\end{align*}

We construct an estimator for the SGD-influence based on this approximation.
For simplicity, here, we focus on one-epoch SGD where each instance appears only once.
Let $Z_t := I - \eta_t H^{[t]}$ and $\pi(j)$ be the SGD step where the instance $z_j$ is used.
By recursively applying the approximation and recalling that $\theta_{-j}^{[\pi(j)+1]} - \theta^{[\pi(j)+1]} = \frac{\eta_{\pi(j)}}{|S_{\pi(j)}|} g(z_j; \theta^{[\pi(j)]})$, we obtain the following estimator
\begin{align}
    \theta_{-j}^{[T]} - \theta^{[T]} \approx \frac{\eta_{\pi(j)}}{|S_{\pi(j)}|} Z_{T-1}  Z_{T-2} \cdots Z_{\pi(j)+1} g(z_j; \theta^{[\pi(j)]}) =: \Delta \theta_{-j} .
    \label{eq:diff_approx}
\end{align}

%%%%%%%%%%%%%%%%%%%%%%%%%%%%%%%%%%%%%%%%%%%%%%%%%%%%%%%%%%%%%%%%%%%%
\subsection{Properties of $\Delta \theta_{-j}$}

Here, we evaluate the estimation error of the proposed estimator $\Delta \theta_{-j}$ for both convex and non-convex loss functions.
A notable property of the estimator $\Delta \theta_{-j}$ is that, unlike existing methods, the error can be evaluated \emph{even without assuming the convexity of the loss function $\ell(z; \theta)$}.

\paragraph{Convex Loss}
For smooth and strongly convex problems, there exists a uniform bound on the gap between the SGD-influence $\theta_{-j}^{[T]} - \theta^{[T]}$ and the proposed estimator $\Delta \theta_{-j}$.
\begin{theorem}
\label{th:convex}
Assume that $\ell(z; \theta)$ is twice differentiable with respect to the parameter $\theta$ and there exist $\lambda, \Lambda > 0$ such that $\lambda I \prec \nabla^2_\theta \ell(z; \theta) \prec \Lambda I$ for all $z, \theta$.
If $\eta_s \leq 1/\Lambda$, then we get
\begin{align}
    \|(\theta_{-j}^{[T]} - \theta^{[T]}) - \Delta \theta_{-j}\| \le \sqrt{2 (h_j(\lambda)^2 + h_j(\Lambda)^2)} ,
    \label{eq:bound_convex}
\end{align}
where $h_j(a) := \frac{\eta_{\pi(j)}}{|S_{\pi(j)}|}\prod_{s=\pi(j)+1}^{T-1}(1-\eta_{s} a) \| g(z_j; \theta^{[\pi(j)]}) \|$.
\end{theorem}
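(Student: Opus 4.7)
My plan is to replace the approximate recursion that defines $\Delta\theta_{-j}$ with an \emph{exact} linear recursion for $\theta_{-j}^{[t]}-\theta^{[t]}$, and then control the gap between the two recursions using the spectral properties of their per-step operators.

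First, I would apply the fundamental theorem of calculus to the gradient field $\theta \mapsto g(z_i;\theta)$, writing
\begin{equation*}
g(z_i;\theta_{-j}^{[t-1]})-g(z_i;\theta^{[t-1]}) = \tilde H_i^{[t-1]}(\theta_{-j}^{[t-1]}-\theta^{[t-1]}),
\end{equation*}
where $\tilde H_i^{[t-1]} := \int_0^1 \nabla_\theta^2 \ell(z_i;\theta^{[t-1]}+s(\theta_{-j}^{[t-1]}-\theta^{[t-1]}))\,ds$. Averaging over $S_{t-1}$ gives a symmetric matrix $\tilde H^{[t-1]}$ that still satisfies $\lambda I \prec \tilde H^{[t-1]} \prec \Lambda I$. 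Substituting into the two SGD updates yields the \emph{exact} recursion $\theta_{-j}^{[t]}-\theta^{[t]} = \tilde Z_{t-1}(\theta_{-j}^{[t-1]}-\theta^{[t-1]})$ with $\tilde Z_s := I-\eta_s \tilde H^{[s]}$, for every $t>\pi(j)+1$. Because the two trajectories coincide through step $\pi(j)$ and, under the one-epoch assumption, $z_j$ contributes only at step $\pi(j)$, the initial value is exact: $\theta_{-j}^{[\pi(j)+1]}-\theta^{[\pi(j)+1]} = \frac{\eta_{\pi(j)}}{|S_{\pi(j)}|}\,g(z_j;\theta^{[\pi(j)]}) =: v$. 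Unrolling produces the closed form $\theta_{-j}^{[T]}-\theta^{[T]} = \tilde Z_{T-1}\cdots \tilde Z_{\pi(j)+1}\,v$, to be compared directly with $\Delta\theta_{-j} = Z_{T-1}\cdots Z_{\pi(j)+1}\,v$.

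Next, the step-size condition $\eta_s\le 1/\Lambda$ combined with the Hessian bounds makes every $\tilde Z_s$ and every $Z_s$ symmetric positive semi-definite with spectrum in $[1-\eta_s\Lambda,\,1-\eta_s\lambda]$. Iterating the upper spectral endpoint along each chain yields $\|\theta_{-j}^{[T]}-\theta^{[T]}\|\le h_j(\lambda)$ and $\|\Delta\theta_{-j}\|\le h_j(\lambda)$, while iterating the lower endpoint gives the matching reverse inequalities controlled by $h_j(\Lambda)$. Combining with the triangle inequality and the elementary fact $(|a|+|b|)^2 \le 2(|a|^2+|b|^2)$ then produces the target bound; pairing one spectral endpoint with each of the two products is precisely how both $h_j(\lambda)^2$ and $h_j(\Lambda)^2$ end up in the final expression.

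I expect the main technical obstacle to be the last bookkeeping step. The factors inside each of the two matrix products do not commute, so the products themselves are not symmetric, and the scalar identity $|\prod \tilde z_s - \prod z_s|\le \prod(1-\eta_s\lambda)-\prod(1-\eta_s\Lambda)$ does not transport to the matrix level directly. The argument must therefore consistently assign opposite spectral endpoints to the two chains $\tilde Z_{T-1}\cdots\tilde Z_{\pi(j)+1}$ and $Z_{T-1}\cdots Z_{\pi(j)+1}$, relying crucially on the PSD-ness of each factor (secured by $\eta_s\le 1/\Lambda$) so that the endpoints compound multiplicatively into the $h_j(\cdot)$ factors along each chain without sign flips.
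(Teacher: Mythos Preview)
Your setup and the two-sided norm estimates coincide with the paper's argument: the exact linear recursion via the (integral) mean-value theorem, the closed forms $\tilde Z_{T-1}\cdots\tilde Z_{\pi(j)+1}v$ and $Z_{T-1}\cdots Z_{\pi(j)+1}v$ with the common initial vector $v$, and the sandwich $h_j(\Lambda)\le\|\,\cdot\,\|\le h_j(\lambda)$ for both quantities are exactly the content of the paper's auxiliary lemma. The divergence is only in the final combination step.

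The combination you propose does not reach the stated bound. Writing $a=\theta_{-j}^{[T]}-\theta^{[T]}$ and $b=\Delta\theta_{-j}$, the triangle inequality plus $(|a|+|b|)^2\le 2(|a|^2+|b|^2)$ gives only
\[
\|a-b\|^2\le(\|a\|+\|b\|)^2\le 2(\|a\|^2+\|b\|^2)\le 4\,h_j(\lambda)^2,
\]
and the \emph{lower} norm bound $h_j(\Lambda)$ never enters this chain of inequalities. There is no mechanism by which ``pairing one spectral endpoint with each of the two products'' can be made precise inside a triangle-inequality argument: both terms are controlled from above, and the upper endpoint $h_j(\lambda)$ is the only one that matters. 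What the paper does instead is expand
\[
\|a-b\|^2=\|a\|^2+\|b\|^2-2\langle a,b\rangle,
\]
bound $\|a\|^2$ and $\|b\|^2$ by $h_j(\lambda)^2$ via the upper norm estimate, and bound the cross term by $-2\langle a,b\rangle\le 2h_j(\Lambda)^2$; this is where $h_j(\Lambda)$ appears. So the missing ingredient in your plan is a \emph{lower} bound on the inner product $\langle\theta_{-j}^{[T]}-\theta^{[T]},\Delta\theta_{-j}\rangle$, not a rearrangement of endpoint assignments along the two operator chains. Be aware that this cross-term estimate is not a consequence of the norm bounds in the lemma by themselves (Cauchy--Schwarz gives only $\langle a,b\rangle\ge -h_j(\lambda)^2$), so it requires its own justification using the shared initial vector $v$ and the PSD structure of the factors; the paper is quite terse at this point.
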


\paragraph{Non-Convex Loss}
For non-convex loss functions, the aforementioned uniform bound no longer holds.
However, we can still evaluate the growth of the estimation error.
For simplicity, we consider a constant learning rate $\eta=O(\gamma/\sqrt{T})$ that depends only on the number of total SGD steps $T$.
It should be noted that SGD with this learning rate is theoretically justified to converge to a stationary point \citep{ghadimi2013stochastic}.
The next theorem indicates that $\Delta \theta_{-j}$ can approximate SGD-influence well if Hessian $\nabla_\theta^2 \ell(\theta,z)$ is Lipschitz continuous.

\begin{theorem}
\label{th:nonconvex}
Assume that $\ell(z; \theta)$ is twice differentiable and 
$\nabla_\theta^2 \ell(z; \theta)$ is $L$-Lipschitz continuous with respect to $\theta$.
Moreover, assume that $\| \nabla_\theta \ell(z; \theta) \| \leq G$, $\nabla^2_\theta \ell(z; \theta) \prec \Lambda I$ for all $z, \theta$.
Consider SGD with a learning rate $\eta=O(\gamma/\sqrt{T})$.
Then,
\begin{align}
    \|(\theta_{-j}^{[T]} - \theta^{[T]}) - \Delta \theta_{-j}\| \leq \frac{\exp^{ O(\gamma \Lambda \sqrt{T}) } \gamma^{2} T G^{2} L}{\Lambda}.
    \label{eq:bound_nonconvex}
\end{align}
\end{theorem}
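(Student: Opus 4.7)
The plan is to set up a perturbation recursion for the true SGD-influence, compare it term-by-term to the linear recursion defining the estimator, and control the accumulation of second-order Taylor remainders.

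First, let $\delta^{[t]} := \theta_{-j}^{[t]} - \theta^{[t]}$ and let $\widehat{\delta}^{[t]}$ be the estimator, which by construction obeys the linear recursion $\widehat{\delta}^{[t+1]} = Z_t \widehat{\delta}^{[t]}$ with initial condition $\widehat{\delta}^{[\pi(j)+1]} = \frac{\eta}{|S_{\pi(j)}|} g(z_j;\theta^{[\pi(j)]})$. Under the one-epoch assumption, $\theta^{[t]} = \theta_{-j}^{[t]}$ for $t\le \pi(j)$, so $\delta^{[\pi(j)+1]} = \widehat{\delta}^{[\pi(j)+1]}$ \emph{exactly}, and for every subsequent step $j \notin S_t$, which gives the clean exact recursion $\delta^{[t+1]} = \delta^{[t]} - \frac{\eta}{|S_t|}\sum_{i\in S_t}(g(z_i;\theta_{-j}^{[t]}) - g(z_i;\theta^{[t]}))$.

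Next, I would apply Taylor's theorem with integral remainder to each difference of gradients. Using the $L$-Lipschitz assumption on the Hessian, one gets $g(z_i;\theta_{-j}^{[t]}) - g(z_i;\theta^{[t]}) = \nabla^2_\theta \ell(z_i;\theta^{[t]}) \delta^{[t]} + r_i^{[t]}$ with $\|r_i^{[t]}\| \le \tfrac{L}{2}\|\delta^{[t]}\|^2$. Averaging over $S_t$ yields $\delta^{[t+1]} = Z_t \delta^{[t]} + \tilde r^{[t]}$ with $\|\tilde r^{[t]}\| \le \tfrac{\eta L}{2}\|\delta^{[t]}\|^2$. Subtracting the estimator recursion, the error $\epsilon^{[t]} := \delta^{[t]} - \widehat{\delta}^{[t]}$ satisfies $\epsilon^{[\pi(j)+1]} = 0$ and $\epsilon^{[t+1]} = Z_t \epsilon^{[t]} + \tilde r^{[t]}$, which unrolls to $\epsilon^{[T]} = \sum_{t=\pi(j)+1}^{T-1} Z_{T-1}\cdots Z_{t+1}\, \tilde r^{[t]}$.

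The remaining work is to bound each factor uniformly. The spectral bound $\nabla^2_\theta\ell \prec \Lambda I$ gives $\|Z_s\| \le 1+\eta\Lambda \le \exp(\eta\Lambda)$, so any partial product of the $Z_s$ is at most $\exp(\eta\Lambda T) = \exp(O(\gamma\Lambda\sqrt{T}))$. For $\|\delta^{[t]}\|$, I would apply the same spectral bound to the original recursion to get $\|\delta^{[t+1]}\|\le(1+\eta\Lambda)\|\delta^{[t]}\|$; combined with $\|\delta^{[\pi(j)+1]}\|\le \eta G$ from the gradient-norm hypothesis, this yields $\|\delta^{[t]}\| \le \eta G \exp(\eta\Lambda T)$. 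Inserting these bounds and summing the resulting geometric series $\sum_t \exp(\eta\Lambda(T-t-1)) \le \exp(\eta\Lambda T)/(\eta\Lambda)$ produces $\|\epsilon^{[T]}\| = O(\eta^2 G^2 L/\Lambda)\cdot\exp(O(\gamma\Lambda\sqrt{T}))$; substituting $\eta=O(\gamma/\sqrt{T})$ delivers a bound of the shape stated in the theorem.

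The main obstacle is that both the ``transport factor'' $\prod \|Z_s\|$ and the iterate bound $\|\delta^{[t]}\|$ contribute compounding $(1+\eta\Lambda)$ factors, and the Taylor remainder is \emph{quadratic} in $\|\delta^{[t]}\|$. The delicate step is verifying that the choice $\eta=O(\gamma/\sqrt{T})$ is exactly what keeps the three sources of exponential growth from producing a bound worse than $\exp(O(\gamma\Lambda\sqrt{T}))$, while the polynomial prefactor in $\eta$ collapses the geometric sum into the $1/\Lambda$ term seen in \eqref{eq:bound_nonconvex}.
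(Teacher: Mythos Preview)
Your proposal is correct and follows essentially the same route as the paper: write the exact one-step recursion for $\delta^{[t]}=\theta_{-j}^{[t]}-\theta^{[t]}$ as the linearized map $Z_t\delta^{[t]}$ plus a second-order remainder of size $O(\eta L\|\delta^{[t]}\|^2)$ coming from the Lipschitz Hessian, unroll to get $\epsilon^{[T]}=\sum_t(\prod_{k>t}Z_k)\,\tilde r^{[t]}$, and bound each $\|Z_k\|\le 1+\eta\Lambda$. The paper does exactly this, phrasing the remainder via the mean-value Hessian $H^{[s]}_*$ so that $D_s=\eta(H^{[s]}-H^{[s]}_*)\delta^{[s]}$ and $\|D_s\|\le \eta L\|\delta^{[s]}\|^2$.

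The one substantive difference is in the bound on $\|\delta^{[t]}\|$. You use a Gronwall-type argument through the recursion itself to get $\|\delta^{[t]}\|\le \eta G(1+\eta\Lambda)^{t-\pi(j)-1}$, while the paper uses the cruder but more elementary estimate $\|\theta_{-j}^{[s]}-\theta^{[s]}\|\le 2\eta GT$, obtained simply from $\|\nabla_\theta\ell\|\le G$ and the fact that both trajectories move at most $\eta G$ per step from the common start. The paper's choice is what delivers the polynomial prefactor $\gamma^2 T$ in \eqref{eq:bound_nonconvex} directly; your route yields a smaller prefactor $\gamma^2/T$ at the cost of a larger constant inside $\exp(O(\gamma\Lambda\sqrt{T}))$, which is of course still of the stated shape.
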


%%%%%%%%%%%%%%%%%%%%%%%%%%%%%%%%%%%%%%%%%%%%%%%%%%%%%%%%%%%%%%%%%%%%
\section{Proposed Method for LIE}
\label{sec:lie}

We now derive our proposed method for LIE.
First, we extend the estimator $\Delta \theta_{-j}$ to multi-epoch SGD.
Let $\pi_1(j), \pi_2(j), \ldots, \pi_K(j)$ be the steps where the instance $z_j$ is used in $K$-epoch SGD.
We estimate the effect of the step $\pi_k(j)$ based on (\ref{eq:diff_approx}) as $Z_{T-1} Z_{T-2} \cdots Z_{\pi_k(j)+1} \frac{\eta_{\pi_k(j)}}{|S_{\pi_k(j)}|} g(z_j; \theta^{[\pi_k(j)]})$.
We then add all the effects and derive the estimator $\Delta \theta_{-j} = \sum_{k=1}^K \left(\prod_{s=1}^{T-\pi_k(j)-1} Z_{T-s}\right) \frac{\eta_{\pi_k(j)}}{|S_{\pi_k(j)}|} g(z_j; \theta^{[\pi_k(j)]})$.

Let $u^{[t]} := Z_{t+1} Z_{t+2} \ldots Z_{T-1} u$.
LIE based on the estimator $\Delta \theta_{-j}$ is then obtained as 
\begin{align*}
    \langle u, \Delta \theta_{-j} \rangle = \sum_{k=1}^K \langle u^{[\pi_k(j)]}, \frac{\eta_{\pi_k(j)}}{|S_{\pi_k(j)}|} g(z_j; \theta^{[\pi_k(j)]}) \rangle .
\end{align*}
It should be noted that $u^{[t]}$ can be computed recursively $u^{[t]} \leftarrow Z_{t+1} u^{[t+1]} = u^{[t+1]} - \eta_{t+1} H_{\theta^{[t+1]}} u^{[t+1]}$ by retracing SGD.
The proposed method is based on this recursive computation.

The proposed method consists of two phases, the training phase and the inference phase, as shown in Algorithms~\ref{alg:lie_train} and \ref{alg:lie_infer}.
In the training phase in Algorithm~\ref{alg:lie_train}, during running SGD, we store the tuple of the instance indices $S_t$, learning rate $\eta_t$, and parameter $\theta^{[t]}$.
In the inference phase in Algorithm~\ref{alg:lie_infer}, we retrace the stored information and compute $u^{[t]}$ in each step.

Note that, in Algorithm~\ref{alg:lie_infer}, we need to compute $H^{[t]} u^{[t]}$.
A naive implementation requires $O(p^2)$ memory to store the matrix $H^{[t]}$, which can be prohibitive for very large models.
We can avoid this difficulty by directly computing $H^{[t]} u^{[t]}$ without the explicit computation of $H^{[t]}$.
Because $H^{[t]} u^{[t]} = \frac{1}{|S_t|} \sum_{i \in S_t} \nabla_\theta \langle u^{[t]}, \nabla_\theta \ell(z_i; \theta^{[t]}) \rangle$, we only need to compute the derivative of $\langle u^{[t]}, \nabla_\theta \ell(z_i; \theta^{[t]}) \rangle$, which does not require the explicit computation of $H^{[t]}$.
For example, in Tensorflow, this can be implemented in a few lines.\footnote{\texttt{grads = [tf.gradients(loss[i], theta) for i in St]}; \texttt{Hu = tf.reduce\_mean(} \texttt{[tf.gradients(tf.tensordot(u, g, axes), theta) for g in grads], axis)}}
The time complexity for the inference phase is $O(TM\delta)$, where $M$ is the largest batch size in SGD and $\delta$ is the complexity for computing the parameter gradient.

%\begin{wrapfigure}{R}{.47\textwidth}
\begin{figure}[t]
%\vspace{-12pt}
\begin{minipage}[t]{.49\linewidth}
    \begin{algorithm}[H]
    \caption{LIE for SGD: Training Phase}
    \label{alg:lie_train}
    \begin{algorithmic}
    \STATE{Initialize the parameter $\theta^{[1]}$}
    \STATE{Initialize the sequence as null: $A\leftarrow \emptyset$}
    \FOR{$t = 1, 2, \ldots, T-1$}
    \STATE{$A[t] \leftarrow (S_t, \eta_t, \theta^{[t]})$ \; \small \textit{// store information}}
    \STATE{$\theta^{[t+1]} \leftarrow \theta^{[t]} - \frac{\eta_t}{|S_t|} \sum_{i \in S_t} g(z_i; \theta^{[t]})$}
    \ENDFOR
    \end{algorithmic}
    \end{algorithm}
\end{minipage}
\begin{minipage}[t]{.49\linewidth}
    \vspace{-9pt}
    \begin{algorithm}[H]
    \caption{LIE for SGD: Inference Phase}
    \label{alg:lie_infer}
    \begin{algorithmic}
    \REQUIRE{$u \in \mathbb{R}^p$}
    \STATE{Initialize the influence: $\hat{L}_{-j}^{[T]}(u) \leftarrow 0, \forall j$}
    \FOR{$t = T-1, T-2, \ldots, 1$}
    \STATE{$(S_t,  \eta_t, \theta^{[t]}) \leftarrow A[t]$ \; \textit{\small // load information}}
    \STATE{\textit{\small // update the linear influence of $z_j$}}
    \STATE{$\hat{L}_{-j}^{[T]}(u) \mathrel{+}= \langle u, \frac{\eta_t}{|S_t|} g(z_j; \theta^{[t]}) \rangle, \forall j \in S_t$}
    \STATE{$u \mathrel{-}= \eta_t H^{[t]} u$ \; \textit{\small // update $u$}}
    \ENDFOR
    \end{algorithmic}
    \end{algorithm}
\end{minipage}
%\vspace{-60pt}
%\end{wrapfigure}
\end{figure}

%%%%%%%%%%%%%%%%%%%%%%%%%%%%%%%%%%%%%%%%%%%%%%%%%%%%%%%%%%%%%%%%%%%%
\section{Related Studies}
\label{sec:related}

\paragraph{Influence Estimation}
Traditional studies on influence estimation considered the change in the solution $\hat{\theta}$ to the problem (\ref{eq:obj}) if an instance $z_j$ was absent.
For this purpose, they considered the counterfactual problem $\hat{\theta}_{-j} = \argmin_\theta \sum_{n=1; n \neq j}^N \ell(z; \theta)$.
The goal of the traditional influence estimation is to obtain an estimate of the difference $\hat{\theta}_{-j} - \hat{\theta}$ without retraining the models.
Pioneering studies by \cite{beckman1974distribution},\cite{cook1977detection}, and \cite{pregibon1981logistic} have shown that the influence $\hat{\theta}_{-j} - \hat{\theta}$ can be computed analytically for linear and generalized linear models.
\cite{koh2017understanding} considered a further generalizations of those previous studies.
They introduced the following approximation for strongly convex loss functions $\ell(z; \theta)$:
\begin{align}
    \textstyle \hat{\theta}_{-j} - \hat{\theta} \approx \frac{1}{N} \hat{H}^{-1} \nabla_\theta \ell(z_j; \hat{\theta}) ,
    \label{eq:icml}
\end{align}
where $\hat{H} = \frac{1}{N} \sum_{z \in D} \nabla^2 \ell(z; \hat{\theta})$ is the Hessian of the loss for the optimal model. 
We note that \cite{zhang2018training} and \cite{rajiv2019interpreting} further extended this approach.
\cite{zhang2018training} used this approach to fix the labels of the training instances.
\cite{rajiv2019interpreting} proposed to find the influential instances using the Bayesian quadrature, which includes (\ref{eq:icml}) as its special case.

Our study differs from these traditional approaches in two ways.
First, the proposed SGD-influence does not assume the optimality of the obtained models.
We instead consider the models obtained in each step of SGD, which are not necessarily optimal.
Second, the proposed method does not require the function loss $\ell(z; \theta)$ to be convex.
The proposed method is valid even for non-convex losses.

\paragraph{Outlier Detection}
A typical approach for data cleansing is outlier detection.
Outlier detection is used to remove abnormal instances from the training set before training the model to ensure that the model is not affected by the abnormal instances.
For tabular data, there are several popular methods such as One-class SVM~\citep{scholkopf2001estimating}, Local Outlier Factor~\citep{breunig2000lof}, and Isolation Forest~\citep{liu2008isolation}.
For complex data such as images, autoencoders can also be used~\citep{aggarwal2016outlier,zhou2017anomaly} along with generative adversarial networks~\citep{schlegl2017unsupervised}.
It should be noted that although these methods can find abnormal instances, they are not necessarily influential to the resulting models, as we will show in the experiments.

%%%%%%%%%%%%%%%%%%%%%%%%%%%%%%%%%%%%%%%%%%%%%%%%%%%%%%%%%%%%%%%%%%%%
\section{Experiments}
\label{sec:exp}

Here, we evaluate the two aspects of the proposed method: the performances of LIE and data cleansing.
We used Python 3 and PyTorch 1.0 for the experiments.\footnote{The codes are available at \url{https://github.com/sato9hara/sgd-influence}}
The experiments were conducted on 64bit Ubuntu 16.04 with six Intel Xeon E5-1650 3.6GHz CPU, 128GB RAM, and four GeForce GTX 1080ti.

%%%%%%%%%%%%%%%%%%%%%%%%%%%%%%%%%%%%%%%%%%%%%%%%%%%%%%%%%%%%%%%%%%%%
\subsection{Evaluation of LIE}
\label{sec:exp_lie}

We first evaluate the effectiveness of the proposed method in the estimation of linear influence.
For this purpose, we artificially created small datasets to ensure that the true linear influence is computable.
The detailed setup can be found in Appendix \ref{app:exp_lie}.

\paragraph{Setup}
We used three datasets: Adult~\citep{Dua2017}, 20Newsgroups\footnote{\url{http://qwone.com/~jason/20Newsgroups/}}, and MNIST~\citep{lecun1998gradient}.
These are common benchmarks in tabular data analysis, natural language processing, and image recognition, respectively.
We adopted these three datasets to demonstrate the validity of the proposed method across different data domains.
For 20Newsgroups and MNIST, we selected the two document categories \texttt{ibm.pc.hardware} and \texttt{mac.hardware} and images from one and seven, respectively, so that the problem to be binary classification. 

To observe the validity of the proposed method beyond convexity, we adopted two models, linear logistic regression and deep neural networks.
For deep neural networks, we used a network with two fully connected layers with eight units each and ReLU activation.
We used the sigmoid function at the output layer and adopted the cross entropy as the loss function.
It should be noted that the loss function for linear logistic regression is convex, while that for deep neural networks is non-convex.

In the experiments, we randomly subsampled 200 instances for the training set $D$ and validation set $D'$.
We then estimated the linear influence for the validation loss using Algorithm~\ref{alg:lie_infer}.
Here, we set the query vector $u$ as $u = \frac{1}{|D'|}\sum_{z' \in D'} \nabla_\theta \ell(z'; \theta^{[T]})$.
The estimation of linear influence thus amounts to estimating the change in the validation loss $\langle u, \theta_{-j}^{[T]} - \theta^{[T]} \rangle \approx \frac{1}{|D'|}\sum_{z' \in D'} \left( \ell(z'; \theta_{-j}^{[T]}) - \ell(z'; \theta^{[T]}) \right)$.

\paragraph{Evaluation}
We ran the counterfactual SGD for all $z_j \in D$ and computed the true linear influence.
For evaluation, we compared the estimated influences with this true influence using Kendall's tau and Jaccard index.
With Kendall's tau, a typical metric for ordinal associations, we measured the correlation between the estimated and true influences.
Kendall's tau takes the value between plus and minus one, where one indicates that the orders of the estimated and true influences are identical.
With Jaccard index, we measured the identification accuracy of the influential instances.
For data cleansing, the users are interested in instances with large positive or negative influences.
We selected ten instances with the largest positive and negative true influences and constructed a set of 20 important instances.
We compared this important instances with the estimated ones using Jaccard index, which varies between zero and one, where the value one indicates that the sets are identical.

\paragraph{Results}
We adopted the method proposed by \cite{koh2017understanding} in (\ref{eq:icml}) as the baseline, abbreviated as K\&L.
For deep neural networks, the Hessian matrix is not positive definite, which makes the estimator (\ref{eq:icml}) invalid.
To alleviate the effect of negative eigenvalues, we added a positive constant $1.0$ to the diagonal as suggested by \cite{koh2017understanding}.

Figure~\ref{fig:lie} shows a clear advantage of the proposed method.
The proposed method successfully estimated the true linear influences with high precision.
The estimated influences were concentrated on the diagonal lines, indicating that the estimated influences accurately approximated the true influences.
In contrast, the estimated influences obtained by K\&L were less accurate.
We observed that the estimator (\ref{eq:icml}) sometimes gets numerically unstable owing to the presence of small eigenvalues in the Hessian matrix.

For the quantitative comparison, we repeated the experiment by randomly changing the instance subsampling 100 times.
Table~\ref{tab:res} lists the average Kendall's tau and Jaccard index.
The results again show that the proposed method can accurately estimate the true linear influences.

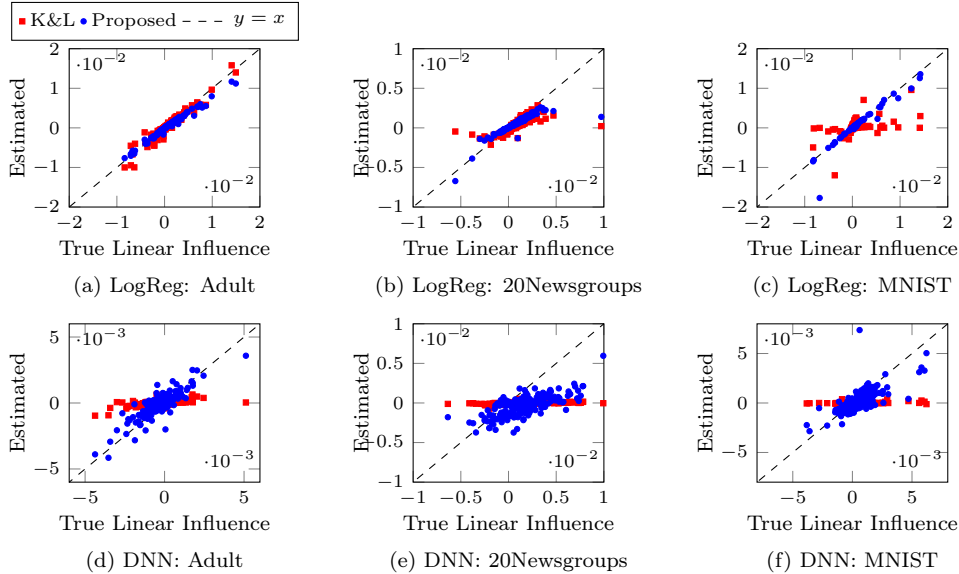
\begin{figure*}[t]
\centering
\begin{tikzpicture}
\begin{axis}[
name=axis1,
scale=0.37,
xmin=-0.02,
xmax=0.02,
ymin=-0.02,
ymax=0.02,
xlabel=True Linear Influence,
ylabel=Estimated,
x label style={at={(axis description cs:0.5,0.1)},anchor=north,font=\footnotesize},
y label style={at={(axis description cs:0.3,0.5)},anchor=south,font=\footnotesize},
scaled ticks=true,
every x tick scale label/.style={
    at={(xticklabel* cs:0.84,-0.55cm)},
    anchor=near xticklabel
},
every y tick scale label/.style={
    at={(yticklabel* cs:0.92,-0.9cm)},
    anchor=near yticklabel
},
tick label style={font=\scriptsize},
legend style={font=\fontsize{7}{5}\selectfont},
legend style={at={(0.45,1.05)},anchor=south,legend columns=-1}
]
\addplot[red, only marks, mark=square*, mark size=1pt] table[x=true, y=icml, col sep=comma]{adult_loss_diff.csv};
\addplot[blue, only marks, mark=*, mark size=1pt] table[x=true, y=proposed, col sep=comma]{adult_loss_diff.csv};
\addplot [black, dashed, domain=-0.02:0.02, samples=3] {x};
\legend{K\&L, Proposed, $y=x$}
\end{axis}
\begin{axis}[
name=axis2,
xshift = 0.3\textwidth,
scale=0.37,
xmin=-0.010,
xmax=0.010,
ymin=-0.010,
ymax=0.010,
xlabel=True Linear Influence,
ylabel=Estimated,
x label style={at={(axis description cs:0.5,0.1)},anchor=north,font=\footnotesize},
y label style={at={(axis description cs:0.3,0.5)},anchor=south,font=\footnotesize},
scaled ticks=true,
every x tick scale label/.style={
    at={(xticklabel* cs:0.84,-0.55cm)},
    anchor=near xticklabel
},
every y tick scale label/.style={
    at={(yticklabel* cs:0.92,-0.9cm)},
    anchor=near yticklabel
},
tick label style={font=\scriptsize},
]
\addplot[red, only marks, mark=square*, mark size=1pt] table[x=true, y=icml, col sep=comma]{20news_loss_diff.csv};
\addplot[blue, only marks, mark=*, mark size=1pt] table[x=true, y=proposed, col sep=comma]{20news_loss_diff.csv};
\addplot [black, dashed, domain=-0.02:0.02, samples=3] {x};
\end{axis}
\begin{axis}[
name=axis3,
xshift = 0.6\textwidth,
scale=0.37,
xmin=-0.02,
xmax=0.02,
ymin=-0.02,
ymax=0.02,
xlabel=True Linear Influence,
ylabel=Estimated,
x label style={at={(axis description cs:0.5,0.1)},anchor=north,font=\footnotesize},
y label style={at={(axis description cs:0.3,0.5)},anchor=south,font=\footnotesize},
scaled ticks=true,
every x tick scale label/.style={
    at={(xticklabel* cs:0.84,-0.55cm)},
    anchor=near xticklabel
},
every y tick scale label/.style={
    at={(yticklabel* cs:0.92,-0.9cm)},
    anchor=near yticklabel
},
tick label style={font=\scriptsize},
]
\addplot[red, only marks, mark=square*, mark size=1pt] table[x=true, y=icml, col sep=comma]{mnist_loss_diff.csv};
\addplot[blue, only marks, mark=*, mark size=1pt] table[x=true, y=proposed, col sep=comma]{mnist_loss_diff.csv};
\addplot [black, dashed, domain=-0.1:0.1, samples=3] {x};
\end{axis}
\node [below=0.8cm] at (axis1.south) {\footnotesize (a) LogReg: Adult};
\node [below=0.8cm] at (axis2.south) {\footnotesize (b) LogReg: 20Newsgroups};
\node [below=0.8cm] at (axis3.south) {\footnotesize (c) LogReg: MNIST};
\end{tikzpicture}
\centering
\begin{tikzpicture}
\begin{axis}[
name=axis1,
scale=0.37,
xmin=-0.006,
xmax=0.006,
ymin=-0.006,
ymax=0.006,
xlabel=True Linear Influence,
ylabel=Estimated,
x label style={at={(axis description cs:0.5,0.1)},anchor=north,font=\footnotesize},
y label style={at={(axis description cs:0.3,0.5)},anchor=south,font=\footnotesize},
scaled ticks=true,
every x tick scale label/.style={
    at={(xticklabel* cs:0.84,-0.55cm)},
    anchor=near xticklabel
},
every y tick scale label/.style={
    at={(yticklabel* cs:0.92,-0.9cm)},
    anchor=near yticklabel
},
tick label style={font=\scriptsize},
legend style={font=\fontsize{7}{5}\selectfont},
legend style={at={(0.45,1.05)},anchor=south,legend columns=-1}
]
\addplot[red, only marks, mark=square*, mark size=1pt] table[x=true, y=icml, col sep=comma]{adult_loss_diff_dnn.csv};
\addplot[blue, only marks, mark=*, mark size=1pt] table[x=true, y=proposed, col sep=comma]{adult_loss_diff_dnn.csv};
\addplot [black, dashed, domain=-0.02:0.03, samples=3] {x};
\end{axis}
\begin{axis}[
name=axis2,
xshift = 0.3\textwidth,
scale=0.37,
xmin=-0.010,
xmax=0.010,
ymin=-0.010,
ymax=0.010,
xlabel=True Linear Influence,
ylabel=Estimated,
x label style={at={(axis description cs:0.5,0.1)},anchor=north,font=\footnotesize},
y label style={at={(axis description cs:0.3,0.5)},anchor=south,font=\footnotesize},
scaled ticks=true,
every x tick scale label/.style={
    at={(xticklabel* cs:0.84,-0.55cm)},
    anchor=near xticklabel
},
every y tick scale label/.style={
    at={(yticklabel* cs:0.92,-0.9cm)},
    anchor=near yticklabel
},
tick label style={font=\scriptsize},
]
\addplot[red, only marks, mark=square*, mark size=1pt] table[x=true, y=icml, col sep=comma]{20news_loss_diff_dnn.csv};
\addplot[blue, only marks, mark=*, mark size=1pt] table[x=true, y=proposed, col sep=comma]{20news_loss_diff_dnn.csv};
\addplot [black, dashed, domain=-0.3:0.3, samples=3] {x};
\end{axis}
\begin{axis}[
name=axis3,
xshift = 0.6\textwidth,
scale=0.37,
xmin=-0.008,
xmax=0.008,
ymin=-0.008,
ymax=0.008,
xlabel=True Linear Influence,
ylabel=Estimated,
x label style={at={(axis description cs:0.5,0.1)},anchor=north,font=\footnotesize},
y label style={at={(axis description cs:0.3,0.5)},anchor=south,font=\footnotesize},
scaled ticks=true,
every x tick scale label/.style={
    at={(xticklabel* cs:0.84,-0.55cm)},
    anchor=near xticklabel
},
every y tick scale label/.style={
    at={(yticklabel* cs:0.92,-0.9cm)},
    anchor=near yticklabel
},
tick label style={font=\scriptsize},
]
\addplot[red, only marks, mark=square*, mark size=1pt] table[x=true, y=icml, col sep=comma]{mnist_loss_diff_dnn.csv};
\addplot[blue, only marks, mark=*, mark size=1pt] table[x=true, y=proposed, col sep=comma]{mnist_loss_diff_dnn.csv};
\addplot [black, dashed, domain=-0.015:0.015, samples=3] {x};
\end{axis}
\node [below=0.8cm] at (axis1.south) {\footnotesize (d) DNN: Adult};
\node [below=0.8cm] at (axis2.south) {\footnotesize (e) DNN: 20Newsgroups};
\node [below=0.8cm] at (axis3.south) {\footnotesize (f) DNN: MNIST};
\end{tikzpicture}
\caption{Estimated linear influences for linear logistic regression (LogReg) and deep neural networks (DNN) for all the 200 training instances. K\&L denotes the method of \protect\cite{koh2017understanding}.}
\label{fig:lie}
\end{figure*}

\begin{table}[t]
\small
\centering
\caption{Average Kendall's tau and Jaccard index ($\pm$ std.).}
\label{tab:res}
\begin{tabular}{ccccccccc}
\toprule
& \multicolumn{4}{c}{Kendall's tau} & \multicolumn{4}{c}{Jaccard index} \\
\cmidrule(r){2-9}
& \multicolumn{2}{c}{LogReg} & \multicolumn{2}{c}{DNN} & \multicolumn{2}{c}{LogReg} & \multicolumn{2}{c}{DNN} \\
& Proposed & K\&L & Proposed & K\&L & Proposed & K\&L & Proposed & K\&L \\
\midrule
Adult & .93 (.02) & .85 (.07) & .75 (.10) & .54 (.12) & .80 (.10) & .60 (.17) & .59 (.16) & .32 (.11) \\
20News & .94 (.05) & .82 (.15) & .45 (.12) & .37 (.12) & .79 (.15) & .52 (.19) & .25 (.08) & .11 (.07) \\
MNIST & .95 (.02)  & .70 (.15) & .45 (.12)  & .27 (.16) & .83 (.10)  & .41 (.16) & .37 (.15)  & .27 (.12) \\
\bottomrule
\end{tabular}
\end{table}

%%%%%%%%%%%%%%%%%%%%%%%%%%%%%%%%%%%%%%%%%%%%%%%%%%%%%%%%%%%%%%%%%%%%
\subsection{Evaluation on Data Cleansing}
\label{sec:exp_cleans}

We now show that the proposed method is effective for data cleansing.
Specifically, on MNIST~\citep{lecun1998gradient} and CIFAR10~\citep{krizhevsky2009learning}, we demonstrate that we can effectively improve the models by removing influential instances suggested by the proposed method.
The detailed setup and full results can be found in Appendix \ref{app:exp_cleans} and \ref{app:exp_cleans_res}.

\paragraph{Setup}
We used MNIST and CIFAR10.
From the original training set, we held out randomly selected 10,000 instances for the validation set and used the remaining instances as the training set.
As models, we used convolutional neural networks.
In SGD, we set the epoch $K=20$, batch size $|S_t| = 64$, and learning rate $\eta_t = 0.05$.

As baselines for data cleansing, in addition to K\&L, we adopted two outlier detection methods, Autoencoder~\citep{aggarwal2016outlier} and Isolation Forest~\citep{liu2008isolation}.
We also adopted random data removal as the baseline.
For the proposed method, we introduced an approximate version in this experiment.
In Algorithm~\ref{alg:lie_infer}, the proposed method retraces all steps of the SGD.
In the approximate version, we retrace only one epoch, which requires less computation than the original algorithm.
Moreover, it is also storage friendly because we need to store intermediate information only in the last epoch of SGD.

We proceeded the experiment as follows.
First, we trained the model with SGD using the training set.
We then computed the influence of each training instance using the proposed method as well as other baseline methods.
Here, we used the same query vector $u$ as in the previous experiment.
Finally, we removed the top-$m$ influential instances from the training set and retrained the model.
For model retraining, we ran normal SGD for 19 epochs and switched to counterfactual SGD in the last epoch.\footnote{We observed that this works well. For the results with full counterfactual SGD, see Apendix~\ref{app:exp_cleans_res}.}
If the misclassification rate of the retrained model decreases, we can conclude that the training set was effectively cleansed. 

\paragraph{Results}
We repeated the experiment by randomly changing the split between the training and validation set 30 times.
\figurename~\ref{fig:mis} shows the misclassification rates on the test set after data cleansing with each method.\footnote{See Appendix~\ref{app:exp_cleans_res} for the full results.}
It is evident from the figures that the misclassification rates decreased after data cleansing with the proposed method and its approximate version.
We compared the misclassification rates before and after the data cleansing using t-test with the significance level set to $0.05$.
We observed that none of the baseline methods except K\&L attained statistically significant improvements.
By contrast, the proposed method and its approximate version attained statistically significant improvements.
For both datasets, the proposed method and its approximate version were found to be statistically significant for the number of removed instances between 10 and 1000, and 10 and 100, respectively.
Moreover, both methods outperformed K\&L.
The results confirm that the proposed method can effectively suggest influential instances for data cleansing.
We also note that the proposed method and its approximate version performed comparably well.
This observation suggests that, in practice, we only need to retrace only one epoch for inferring the influential instances, which requires less computation and storing intermediate information only in the last epoch of SGD.

\figurename~\ref{fig:example} shows examples of found influential instances.
An interesting observation is that Autoencoder tended to find images with noisy or vivid backgrounds.
Visually, it seems reasonable to select them as outliers.
However, as we have seen in \figurename~\ref{fig:mis}, removing these outliers did not help to improve the models.
In contrast, the proposed method found images with confusing shapes or backgrounds.
Although they are not strongly visually appealing as the outliers, \figurename~\ref{fig:mis} confirms that these instances significantly affect the models.
These observations indicate that the proposed method could find the influential instances, which can be missed even by users with domain knowledge.

\begin{figure}[t]
\begin{tikzpicture}
\centering
\begin{axis}[
name=axis1,
width=0.45\textwidth,
height=120pt,
xmode=log,
xmin=1,
xmax=10000,
ymin=0.007,
ymax=0.011,
xlabel=\# of instances removed,
ylabel=Misclassification rate,
label style={font=\footnotesize},
scaled ticks=false,
tick label style={
	/pgf/number format/fixed,
    /pgf/number format/precision=4,
    font=\scriptsize
},
legend style={font=\fontsize{7}{5}\selectfont},
legend style={at={(1.1,1.08)},anchor=south,legend columns=-1}
]
\addplot[black, densely dotted, line width=0.5mm] table[x=k, y=baseline, col sep=comma]{./figs/mnist_miss.csv};
\addplot[magenta, mark=diamond*, line width=0.3mm] table[x=k, y=random, col sep=comma]{./figs/mnist_miss.csv};
\addplot[color={rgb:red,2;green,4;blue,2}, line width=0.3mm] table[x=k, y=ae, col sep=comma]{./figs/mnist_miss.csv};
\addplot[color={rgb:red,2;green,4;blue,2}, densely dotted, line width=0.3mm] table[x=k, y=iso, col sep=comma]{./figs/mnist_miss.csv};
\addplot[blue, mark=*, mark size=1.5pt, line width=0.3mm] table[x=k, y=sgd_all, col sep=comma]{./figs/mnist_miss.csv};
\addplot[blue, mark options=solid, densely dotted, mark=triangle*, mark size=2pt, line width=0.3mm] table[x=k, y=sgd_last, col sep=comma]{./figs/mnist_miss.csv};
\addplot[red, mark=square*, mark size=1.5pt, line width=0.3mm] table[x=k, y=icml, col sep=comma]{./figs/mnist_miss.csv};
\addplot[black, densely dotted, line width=0.5mm] table[x=k, y=baseline, col sep=comma]{./figs/mnist_miss.csv};
\legend{No Removal, Random, Autoencoder, Isolation Forest, Proposed, Proposed (Approx.), K\&L}
\end{axis}
\begin{axis}[
xshift=2.7in,
name=axis2,
width=0.45\textwidth,
height=120pt,
xmode=log,
xmin=1,
xmax=10000,
ymin=0.15,
ymax=0.19,
xlabel=\# of instances removed,
ylabel=Misclassification rate,
label style={font=\footnotesize},
scaled ticks=false,
tick label style={
	/pgf/number format/fixed,
    /pgf/number format/precision=4,
    font=\scriptsize
},
legend style={font=\fontsize{7}{5}\selectfont},
legend pos=north west
]
\addplot[black, densely dotted, line width=0.5mm] table[x=k, y=baseline, col sep=comma]{./figs/cifar10_miss.csv};
\addplot[magenta, mark=diamond*, line width=0.3mm] table[x=k, y=random, col sep=comma]{./figs/cifar10_miss.csv};
\addplot[color={rgb:red,2;green,4;blue,2}, line width=0.3mm] table[x=k, y=ae, col sep=comma]{./figs/cifar10_miss.csv};
\addplot[color={rgb:red,2;green,4;blue,2}, densely dotted, line width=0.3mm] table[x=k, y=iso, col sep=comma]{./figs/cifar10_miss.csv};
\addplot[blue, mark=*, mark size=1.5pt, line width=0.3mm] table[x=k, y=sgd_all, col sep=comma]{./figs/cifar10_miss.csv};
\addplot[blue, mark options=solid, densely dotted, mark=triangle*, mark size=2pt, line width=0.3mm] table[x=k, y=sgd_last, col sep=comma]{./figs/cifar10_miss.csv};
\addplot[red, mark=square*, mark size=1.5pt, line width=0.3mm] table[x=k, y=icml, col sep=comma]{./figs/cifar10_miss.csv};
\addplot[black, densely dotted, line width=0.5mm] table[x=k, y=baseline, col sep=comma]{./figs/cifar10_miss.csv};
\end{axis}
\node [below=1cm] at (axis1.south) {\footnotesize (a) MNIST};
\node [below=1cm] at (axis2.south) {\footnotesize (b) CIFAR10};
\end{tikzpicture}
\caption{Average misclassification rates on the test set after data cleansing. The errorbars are omitted for better visibility. See Appendix~\ref{app:exp_cleans_res} for the full results.}
\label{fig:mis}
%\end{figure}
%
%\begin{figure}[t]
    \centering
    \begin{tikzpicture}
    \node[anchor=center] (a) at (0, 0) {\includegraphics[width=0.23\textwidth]{./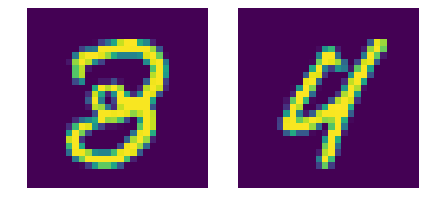}};
    \node[anchor=center, above left=-0.25cm and -1.5cm of a] {\footnotesize $y=3$};
    \node[anchor=center, above right=-0.25cm and -1.35cm of a] {\footnotesize $y=4$};
    \node[anchor=center, right=0.0cm of a] (b) {\includegraphics[width=0.23\textwidth]{./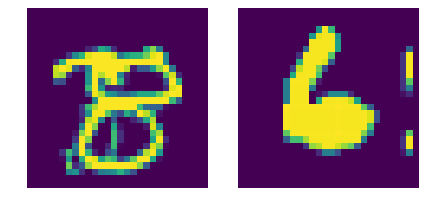}};
    \node[anchor=center, above left=-0.25cm and -1.5cm of b] {\footnotesize $y=8$};
    \node[anchor=center, above right=-0.25cm and -1.35cm of b] {\footnotesize $y=6$};
    \node[anchor=center, right=0.0cm of b] (c) {\includegraphics[width=0.23\textwidth]{./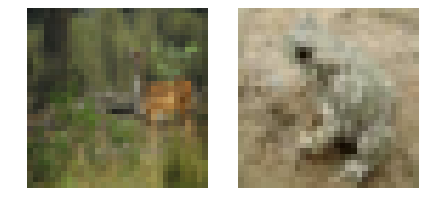}};
    \node[anchor=center, above left=-0.25cm and -1.7cm of c] {\footnotesize $y=$ deer};
    \node[anchor=center, above right=-0.25cm and -1.5cm of c] {\footnotesize $y=$ frog};
    \node[anchor=center, right=0.0cm of c] (d) {\includegraphics[width=0.23\textwidth]{./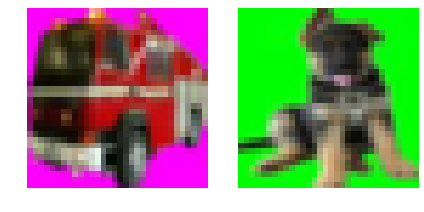}};
    \node[anchor=center, above left=-0.25cm and -1.75cm of d] {\footnotesize $y=$ truck};
    \node[anchor=center, above right=-0.25cm and -1.5cm of d] {\footnotesize $y=$ dog};
    \node[below=-0.2cm of a] {\footnotesize (a) Proposed (Approx.)};
    \node[below=-0.2cm of b] {\footnotesize (b) Autoencoder};
    \node[below=-0.2cm of c] {\footnotesize (c) Proposed (Approx.)};
    \node[below=-0.2cm of d] {\footnotesize (d) Autoencoder};
    \end{tikzpicture}
    \caption{Examples of found influential instances and their labels in (a)(b) MNIST and (c)(d) CIFAR10.}
    \label{fig:example}
\end{figure}

%%%%%%%%%%%%%%%%%%%%%%%%%%%%%%%%%%%%%%%%%%%%%%%%%%%%%%%%%%%%%%%%%%%%
\section{Conclusion}

We considered supporting non-experts to build accurate machine learning models through data cleansing by suggesting influential instances.
Specifically, we aimed at establishing an algorithm that can infer the influential instances even for non-convex loss functions such as deep neural networks.
Our idea is to use the fact that modern machine learning models are trained using SGD.
We introduced a refined notion of influence for the models trained with SGD, which was named SGD-influence.
We then proposed an algorithm that can accurately approximate the SGD-influence without running extra SGD.
We also proved that the proposed method can provide valid estimates even for non-convex loss functions.
The experimental results have shown that the proposed method can accurately infer influential instances.
Moreover, on MNIST and CIFAR10, we demonstrated that the models can be effectively improved by removing the influential instances suggested by the proposed method.

\bibliographystyle{named}
\bibliography{main}

%%%%%%%%%%%%%%%%%%%%%%%%%%%%%%%%%%%%%%%%%%%%%%%%%%%%%%%%%%%%%%%%%%%%
%%%%%%%%%%%%%%%%%%%%%%%%%%%%%%%%%%%%%%%%%%%%%%%%%%%%%%%%%%%%%%%%%%%%
\clearpage
\appendix

%%%%%%%%%%%%%%%%%%%%%%%%%%%%%%%%%%%%%%%%%%%%%%%%%%%%%%%%%%%%%%%%%%%%
\section{Relation to \cite{koh2017understanding}}
\label{app:rel}

\subsection{Brief Review}
As we mentioned in Section~\ref{sec:related}, \cite{koh2017understanding} proposed to estimate the influence by (\ref{eq:icml}), which is
\begin{align*}
    \hat{\theta}_{-j} - \hat{\theta} \approx \frac{1}{N} \hat{H}^{-1} \nabla_\theta \ell(z_j; \hat{\theta}) ,
\end{align*}
where $\hat{H} = \frac{1}{N} \sum_{z \in D} \nabla^2 \ell(z; \hat{\theta})$ is the Hessian of the problem (\ref{eq:obj}) for the optimal model $\hat{\theta}$.

Note that, $\hat{H}^{-1} \nabla_\theta \ell(z_j; \hat{\theta})$ is equivalent to the solution to the following optimization problem:
\begin{align}
    \argmin_{\beta \in \mathbb{R}^p} \frac{1}{2} \langle \beta, \hat{H} \beta \rangle - \langle \nabla_\theta \ell(z_j; \hat{\theta}), \beta \rangle .
    \label{eq:kl}
\end{align}
\cite{koh2017understanding} proposed computing $\hat{H}^{-1} \nabla_\theta \ell(z_j; \hat{\theta})$ by solving this optimization problem using conjugate gradient descent or its improved version.
In the optimization, they also proposed to use the mini-batch approximation of the Hessian matrix: they proposed to use $\hat{H}_S = \frac{1}{|S|} \sum_{z \in S} \nabla^2 \ell(z; \hat{\theta})$ on the mini-batch $S \subset D$ instead of the Hessian matrix $\hat{H}$ computed on the all training instances $D$.

\subsection{Relation to the Proposed Method}
Here, we show the relationship between the proposed method and the method of \cite{koh2017understanding}.
Suppose that we solve the problem (\ref{eq:kl}) using SGD.
In the $t$-th step of SGD, we update $\beta$ by
\begin{align*}
    \beta^{[t+1]} = \beta^{[t]} - \gamma_t (\hat{H}_{S_t} \beta^{[t]} - \nabla_\theta \ell(z_j; \hat{\theta})) = (I - \gamma_t \hat{H}_{S_t}) \beta^{[t]} + \gamma_t \nabla_\theta \ell(z_j; \hat{\theta}) ,
\end{align*}
where $S_t$ is the mini-batch and $\gamma_t > 0$ is a learning rate.
Suppose that we initialized $\beta^{[1]} = \nabla_\theta \ell(z_j; \hat{\theta})$ and $\gamma := \max_t \gamma_t$.
Then, the SGD for the problem (\ref{eq:kl}) can be expressed as
\begin{align*}
    \beta^{[2]} & = (I - \gamma_1 \hat{H}_{S_1}) \nabla_\theta \ell(z_j; \hat{\theta}) + \gamma_1 \nabla_\theta \ell(z_j; \hat{\theta}), \\
    \beta^{[3]} & = (I - \gamma_2 \hat{H}_{S_2}) \beta^{[2]} + \gamma_2 \nabla_\theta \ell(z_j; \hat{\theta}) = (I - \gamma_2 \hat{H}_{S_2}) (I - \gamma_1 \hat{H}_{S_1}) \nabla_\theta \ell(z_j; \hat{\theta}) + O(\gamma) , \\
    \vdots & \\
    \beta^{[T]} &= (I - \gamma_{T-1} \hat{H}_{S_{T-1}})(I - \gamma_{T-2} \hat{H}_{S_{T-2}}) \ldots (I - \gamma_1 \hat{H}_{S_1}) \nabla_\theta \ell(z_j; \hat{\theta}) + O(\gamma) .
\end{align*}
Here, let $\hat{Z}_t := I - \gamma_{t} \hat{H}_{S_{t}}$, and we obtain
\begin{align*}
    \hat{\theta}_{-j} - \hat{\theta} \approx \frac{1}{N} \hat{H}^{-1} \nabla_\theta \ell(z_j; \hat{\theta}) \approx \beta^{[T]} = \frac{1}{N}  \hat{Z}_{T-1} \hat{Z}_{T-2} \ldots \hat{Z}_{1} \nabla_\theta \ell(z_j; \hat{\theta}) + O\left(\frac{\gamma}{N}\right) .
\end{align*}
When $\gamma$ is small and the last term is ignorable, this result resembles to the proposed estimator $\Delta \theta_{-j}$ in Section~\ref{sec:est}.
Instead of $\hat{Z}_t := I - \gamma_{t} \hat{H}_{S_{t}}$ computed at the optimal model $\hat{\theta}$, the proposed estimator uses $Z_t = I - \eta_t H^{[t]}$ based on the model $\theta^{[t]}$ in the $t$-th SGD step in the training.

%%%%%%%%%%%%%%%%%%%%%%%%%%%%%%%%%%%%%%%%%%%%%%%%%%%%%%%%%%%%%%%%%%%%
\section{Proof of Theorems}

Before proving Theorems~\ref{th:convex} and \ref{th:nonconvex}, we first prove the next lemma.

\begin{lemma}
\label{lem:hess}
Assume that $\ell(z; \theta)$ is twice differentiable with respect to the parameter $\theta$, and assume that there exist $\lambda, \Lambda > 0$ such that $\lambda I \prec \nabla^2_\theta \ell(z; \theta) \prec \Lambda I$ for all $z, \theta$.
If $\eta_s \leq 1/\Lambda$, then we get
\begin{align}
& h_j(\Lambda) \leq \|\Delta \theta_{-j}\| \leq h_j(\lambda), \label{eq:estimator_bound} \\
& h_j(\Lambda) \leq \|\theta_{-j}^{[T]} - \theta^{[T]}\| \leq h_j(\lambda) , \label{eq:provable_bound}
\end{align}
where $h_j(a) := \frac{\eta_{\pi(j)}}{|S_{\pi(j)}|}\prod_{s=\pi(j)+1}^{T-1}(1-\eta_{s} a) \| g(z_j; \theta^{[\pi(j)]}) \|$.
\end{lemma}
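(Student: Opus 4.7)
The plan is to establish both bounds by first extracting the common linear-algebraic core: each factor $Z_s$ (and its analogue that will appear for the true difference) is a symmetric matrix whose spectrum is sandwiched between $1-\eta_s\Lambda$ and $1-\eta_s\lambda$, and under the stepsize assumption $\eta_s\le 1/\Lambda$ these bounds are both non-negative, so $Z_s$ is positive definite. From this, I would derive the two-sided inequality
\[
\prod_{s=\pi(j)+1}^{T-1}(1-\eta_s\Lambda)\,\|v\|\;\le\;\Bigl\|\,Z_{T-1}Z_{T-2}\cdots Z_{\pi(j)+1}\,v\,\Bigr\|\;\le\;\prod_{s=\pi(j)+1}^{T-1}(1-\eta_s\lambda)\,\|v\|
\]
for any $v\in\mathbb{R}^{p}$. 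The upper inequality is just submultiplicativity of the operator norm together with $\|Z_s\|=\lambda_{\max}(Z_s)\le 1-\eta_s\lambda$. The lower inequality is the one that requires a little more care; since the $Z_s$ need not commute, I would use $\|Az\|\ge \|z\|/\|A^{-1}\|$ applied to $A=Z_{T-1}\cdots Z_{\pi(j)+1}$, then bound $\|A^{-1}\|\le\prod_s\|Z_s^{-1}\|=\prod_s(1-\eta_s\Lambda)^{-1}$.

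Inequality (\ref{eq:estimator_bound}) then follows immediately by taking $v = g(z_j;\theta^{[\pi(j)]})$ and multiplying by the prefactor $\eta_{\pi(j)}/|S_{\pi(j)}|$ in the definition of $\Delta\theta_{-j}$.

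For (\ref{eq:provable_bound}), the goal is to show that the \emph{true} difference $\delta_t := \theta_{-j}^{[t]}-\theta^{[t]}$ satisfies an analogous recursion $\delta_{t+1}=\tilde Z_t\,\delta_t$ with $\tilde Z_t$ obeying the same spectral bounds. I would proceed step by step: for $t\le \pi(j)$ the two SGD trajectories coincide so $\delta_{\pi(j)}=0$ and a single-step computation gives $\delta_{\pi(j)+1}=\frac{\eta_{\pi(j)}}{|S_{\pi(j)}|}\,g(z_j;\theta^{[\pi(j)]})$. For $t>\pi(j)$ the same mini-batch is used by both trajectories (one-epoch SGD), so the only difference comes from evaluating the gradients at different parameters. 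Here I would invoke the fundamental theorem of calculus rather than a first-order Taylor expansion, writing
\[
g(z_i;\theta_{-j}^{[t]})-g(z_i;\theta^{[t]})=\Bigl(\int_0^1 \nabla_\theta^{2}\ell\bigl(z_i;\theta^{[t]}+s\,\delta_t\bigr)\,ds\Bigr)\delta_t,
\]
which is \emph{exact}. Averaging this integrated Hessian over $i\in S_t$ yields a symmetric matrix $\tilde H^{[t]}$ still satisfying $\lambda I\prec \tilde H^{[t]}\prec \Lambda I$, hence $\tilde Z_t := I-\eta_t\tilde H^{[t]}$ obeys exactly the same spectral bounds as $Z_t$. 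Unfolding the recursion gives $\delta_T=\tilde Z_{T-1}\cdots\tilde Z_{\pi(j)+1}\,\delta_{\pi(j)+1}$, and applying the displayed product-norm inequality above with $\tilde Z$ in place of $Z$ delivers (\ref{eq:provable_bound}).

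The only real subtlety is the lower product-norm bound in the non-commuting case, which I would handle as indicated via the inverse. Aside from that, the argument is a clean packaging of: spectral control on $\nabla_\theta^2\ell$ $\Rightarrow$ spectral control on $Z_s$ and $\tilde Z_s$ $\Rightarrow$ two-sided bounds on the product $\Rightarrow$ identical bounds for both $\Delta\theta_{-j}$ and $\theta_{-j}^{[T]}-\theta^{[T]}$. Note that this lemma also immediately implies Theorem~\ref{th:convex} by the triangle inequality $\|(\theta_{-j}^{[T]}-\theta^{[T]})-\Delta\theta_{-j}\|\le\|\theta_{-j}^{[T]}-\theta^{[T]}\|+\|\Delta\theta_{-j}\|$ and a crude combination of the two upper bounds (replacing $h_j(\lambda)$ by $\sqrt{h_j(\lambda)^2+h_j(\Lambda)^2}$ to match the stated form), which is a useful sanity check for the plan.
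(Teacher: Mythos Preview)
Your argument is correct and follows the same route as the paper: bound the spectrum of each $Z_s$ via the Hessian hypothesis, then produce an \emph{exact} linear recursion $\delta_{t+1}=\tilde Z_t\,\delta_t$ for the true difference whose factors enjoy the identical spectral bounds. Two points are worth noting. First, your integral mean-value identity is actually more rigorous than the paper's single intermediate point $\theta^{[s]}_*$: the scalar mean-value theorem does not extend verbatim to vector-valued maps, whereas $\int_0^1 \nabla_\theta^2\ell(z_i;\theta^{[t]}+s\,\delta_t)\,ds$ is exact and inherits $\lambda I\prec\cdot\prec\Lambda I$ by integrating the quadratic form. Second, your closing sanity check does not quite recover Theorem~\ref{th:convex}: the triangle inequality yields only $\|(\theta_{-j}^{[T]}-\theta^{[T]})-\Delta\theta_{-j}\|\le 2h_j(\lambda)$, which is \emph{weaker} than the stated $\sqrt{2(h_j(\lambda)^2+h_j(\Lambda)^2)}$ since $h_j(\Lambda)\le h_j(\lambda)$; the paper instead expands the squared norm and bounds the cross term separately. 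This does not affect your proof of the lemma itself.
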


\begin{proof}
Since $(1-\eta_{s}\Lambda)I \prec Z_{s} \prec (1-\eta_{s}\lambda)I$ we immediately obtain (\ref{eq:estimator_bound}) 
from the definition (\ref{eq:diff_approx}) of $\Delta \theta_{-j}$.

We next show the inequality (\ref{eq:provable_bound}). 
There exists $r \in [0,1]$ such that for $\theta^{[s]}_* := r \theta^{[s]}_{-j} + (1-r)\theta^{[s]}$, 
\begin{align*}
    \frac{1}{|S_{s}|}\sum_{i \in S_{s}} \left(\nabla_\theta \ell(z_i; \theta_{-j}^{[s]}) - \nabla_\theta \ell(z_i; \theta^{[s]})\right) 
    = H^{[s]}_* (\theta_{-j}^{[s]} - \theta^{[s]}) ,
\end{align*}
where $H^{[s]}_* := \frac{1}{|S_{s}|} \sum_{i \in S_{s}} \nabla_\theta^2 \ell(z_i; \theta^{[s]}_*)$.
Therefore, by setting $Z_{s}^* := (I-\eta_s H^{[s]}_*)$, we can show the inequality (\ref{eq:provable_bound}) 
in a similar way to the proof of $(\ref{eq:estimator_bound})$.
\end{proof}

\subsection{Proof of Theorem~\ref{th:convex}}
\begin{proof}
From Lemma~\ref{lem:hess}, 
\begin{align*}
    \|(\theta_{-j}^{[T]} - \theta^{[T]}) - \Delta \theta_{-j}\|^2 &= \|\theta_{-j}^{[T]} - \theta^{[T]}\|^2 + \|\Delta \theta_{-j}\|^2 - 2 \langle \theta_{-j}^{[T]} - \theta^{[T]},  \Delta \theta_{-j} \rangle \\
    & \le h_j(\lambda)^2 + h_j(\lambda)^2 + 2 h_j(\Lambda)^2 = 2 (h_j(\lambda)^2 + h_j(\Lambda)^2) .
\end{align*}
\end{proof}

\subsection{Proof of Theorem~\ref{th:nonconvex}}
\begin{proof}
\[ \theta^{[s+1]}_{-j} - \theta^{[s+1]} = Z_{s} (\theta_{-j}^{[s]} - \theta^{[s]}) + \eta (H^{[s]}-H^{[s]}_*)(\theta_{-j}^{[s]} - \theta^{[s]}), \]
where $H^{[s]}_*$ is the same as that in the proof of Lemma \ref{lem:hess}.
We set $D_s := \eta (H^{[s]}-H^{[s]}_*)(\theta_{-j}^{[s]} - \theta^{[s]})$.
Applying this equalities recursively over $s \in \{\pi(j),\ldots, T-1\}$, we get
\[ \theta^{[T]}_{-j} - \theta^{[T]} 
= \Delta \theta_{-j} + \sum_{s=\pi(j)}^{T-1} \prod_{k=s+1}^{T-1}Z_k D_s. \]
Hence, a remaining problem is to bound the norm of the second term in the right hand side of this equality, which corresponds to a gap we want to evaluate.
Since $\|Z_k\| \leq 1 + \eta \Lambda$, $\|\theta_{-j}^{[s]} - \theta^{[s]}\| \leq 2\eta G T$
and $\|H^{[s]}-H^{[s]}_* \| \leq L \| \theta_{-j}^{[s]} - \theta^{[s]}\|$,
\begin{align*}
\left\| \sum_{s=\pi(j)}^{T-1} \prod_{k=s+1}^{T-1}Z_k D_s \right \| 
&\leq \sum_{s=1}^{T-1} \prod_{k=s+1}^{T-1} \| Z_k \| \|D_s\| \leq \sum_{s=1}^{T-1} \left(1 + \eta \Lambda \right)^{T-s-1} \eta L\| \theta_{-j}^{[s]} - \theta^{[s]}\|^{2} \\
&= 4\frac{\left(1 + \eta \Lambda \right)^{T-1} - 1}{  \left(1 + \eta \Lambda \right) - 1 } \eta^3 T^2 G^2 L \le 4\frac{\left(1 + O(\gamma \Lambda / \sqrt{T}) \right)^T}{\Lambda} \gamma^2 T G^2 L.
\end{align*}
\end{proof}

%%%%%%%%%%%%%%%%%%%%%%%%%%%%%%%%%%%%%%%%%%%%%%%%%%%%%%%%%%%%%%%%%%%%
\section{Details and Results of Experiments}

%%%%%%%%%%%%%%%%%%%%%%%%%%%%%%%%%%%%%%%%%%%%%%%%%%%%%%%%%%%%%%%%%%%%
\subsection{Setups in Section \ref{sec:exp_lie}}
\label{app:exp_lie}

\paragraph{Datasets}
We used three datasets: Adult~\citep{Dua2017}, 20Newsgroups\footnote{\url{http://qwone.com/~jason/20Newsgroups/}}, and MNIST~\citep{lecun1998gradient}.
These are common benchmarks in tabular data analysis, natural language processing, and image recognition, respectively.
We adopted these three datasets to demonstrate the validity of the proposed algorithm across different data domains.

We prepossessed each dataset as follows.
In Adult, we transformed categorical features to numerical attributes~\footnote{We used the implementation available at \url{https://www.kaggle.com/kost13/us-income-logistic-regression/notebook}}.
In 20Newsgroups, we selected the two document categories \texttt{ibm.pc.hardware} and \texttt{mac.hardware}.
As a preprocessing, we transformed the documents into numerical vectors using tf-idf, while removing frequent and scarce words.
In MNIST, we selected the images from the two categories one and seven, so that the problem to be binary classification.

\begin{wraptable}{r}{0.48\textwidth}
    %\vspace{-16pt}
    \centering
    \small
    \caption{Parameters used in SGD: $K$ denotes the number of epochs. $|S_t|$ denotes the batch size.}
    \label{tab:params}
    \begin{tabular}{ccccccc}
    \toprule
    & \multicolumn{3}{c}{LogReg} & \multicolumn{3}{c}{DNN} \\
    & $K$ & $|S_t|$ & $\eta_t$ & $K$ & $|S_t|$ & $\eta_t$ \\
    \midrule
    Adult & 20 & 5 & $\frac{0.1}{\sqrt{t}}$ & 10 & 20 & 0.1 \\
    20News & 10 & 5 & $\frac{0.01}{\sqrt{t}}$ & 10 & 20 & 0.1 \\
    MNIST & 5 & 5 & $\frac{0.1}{\sqrt{t}}$ & 10 & 20 & 0.1 \\
    \bottomrule
    \end{tabular}
    %\vspace{-12pt}
\end{wraptable}

\paragraph{Models}
To see the validity of the proposed method beyond convexity, we adopted two models, which are linear logistic regression and deep neural networks.
For linear logistic regression, we adopted the $\ell_2$-regularized loss $\ell(z;\theta) = \log(\exp(- y \langle \theta, x \rangle) + 1) + \frac{\alpha}{2} \|\theta\|^2$ where $y \in \{-1, 1\}$.
In the experiments, we determined the regularization parameter $\alpha$ using cross validation.
For deep neural networks, we used a network with two fully connected layers each of which has eight units with ReLU as an activation function.
We used the sigmoid function at the output layer, and adopted the cross entropy as the loss function.
To run SGD, we used the parameters shown in \tablename~\ref{tab:params}.
We note that the loss function for the linear logistic regression is convex, while that for the deep neural networks is non-convex.

\paragraph{Target Linear Influence}
In the experiments, we randomly subsampled 200 instances for the training set $D$ and the validation set $D'$.
We then estimated the linear influence for the validation loss using Algorithm~\ref{alg:lie_infer}.
Here, we set the query vector $u$ as $u = \frac{1}{|D'|}\sum_{z' \in D'} \nabla_\theta \ell(z'; \theta^{[T]})$.
Estimation of the linear influence thus amounts to estimating the change in the validation loss
\begin{align*}
    \langle u, \theta_{-j}^{[T]} - \theta^{[T]} \rangle \approx \frac{1}{|D'|}\sum_{z' \in D'} \left( \ell(z'; \theta_{-j}^{[T]}) - \ell(z'; \theta^{[T]}) \right).
\end{align*}
We note that the instances with large negative linear influences are deemed to be negatively affecting the resulting models.
Removing such instances can improve the validation loss, and thus the users can prioritize the inspection of such instances.

\paragraph{Baseline Method}
We adopted the method of \cite{koh2017understanding} as the baseline, which we abbreviated as K\&L.
In K\&L, we estimate the influence by (\ref{eq:icml}), which is
\begin{align*}
    \hat{\theta}_{-j} - \hat{\theta} \approx \frac{1}{N} \hat{H}^{-1} \nabla_\theta \ell(z_j; \hat{\theta}) ,
\end{align*}
where $\hat{H} = \frac{1}{N} \sum_{z \in D} \nabla^2 \ell(z; \hat{\theta})$ is the Hessian of the problem (\ref{eq:obj}) for the optimal model $\hat{\theta}$.
For a query vector $u \in \mathbb{R}^p$, the linear influence can be estimated as 
\begin{align*}
    \langle \hat{\theta}_{-j} - \hat{\theta}, u \rangle \approx \frac{1}{N} \langle \hat{H}^{-1} \nabla_\theta \ell(z_j; \hat{\theta}) , u \rangle = \frac{1}{N} \langle \nabla_\theta \ell(z_j; \hat{\theta}) , \hat{H}^{-1} u \rangle .
\end{align*}
Here, the last equality follows from the symmetricity of the Hessian matrix.
Thus, for estimating the linear influence for all the training instances, we first compute $\hat{H}^{-1} u$, and then take an inner product with $\nabla_\theta \ell(z_j; \hat{\theta})$ for each training instance $z_j \in D$.

Note that, $\hat{H}^{-1}u$ is equivalent to the solution to the following optimization problem:
\begin{align}
    \argmin_{\beta \in \mathbb{R}^p} \frac{1}{2} \langle \beta, \hat{H} \beta \rangle - \langle u, \beta \rangle .
    \label{eq:kl_u}
\end{align}
\cite{koh2017understanding} proposed computing $\hat{H}^{-1} u$ by solving this optimization problem using conjugate gradient descent or its improved version.
In the optimization, they also proposed to use the mini-batch approximation of the Hessian matrix: they proposed to use $\hat{H}_S = \frac{1}{|S|} \sum_{z \in S} \nabla^2 \ell(z; \hat{\theta})$ on the mini-batch $S \subset D$ instead of the Hessian matrix $\hat{H}$ computed on the all training instances $D$.
In the experiment, we ran momentum-SGD for 200 epochs, where we set the learning rate to be $0.01$, the size of momentum to be $0.9$, and the batch size to be $200$.

\paragraph{Evaluation Metrics}
In the experiments, we ran the counterfactual SGD for all $z_j \in D$, and computed the true linear influence.
We then used this ground truth to evaluate the goodness of the estimated linear influences.
For evaluation, we adopted the following two metrics.
The first metric is Kendall's tau.
Kendall's tau is a typical metric for measuring ordinal associations between two observations.
Kendall's tau takes the value between plus and minus one, where the value one indicates that the orders of the two observations are identical.

The second metric is Jaccard index.
For data cleansing, the users are interested in instances with large positive or negative influences.
We measured how accurately those important instances could be identified using the estimated influences.
To this end, we selected 10 instances with largest positive and negative true influences, and constructed a set of 20 important instances.
We compared this true important instances with the estimated important instances using Jaccard index.
Jaccard index measures the similarity of the two sets.
Jaccard index takes the value between zero and one, where the value one indicates that the sets are identical.

%%%%%%%%%%%%%%%%%%%%%%%%%%%%%%%%%%%%%%%%%%%%%%%%%%%%%%%%%%%%%%%%%%%%
\subsection{Setups in Section \ref{sec:exp_cleans}}
\label{app:exp_cleans}

\paragraph{Datasets}
We used MNIST~\citep{lecun1998gradient} and CIFAR10~\citep{krizhevsky2009learning}.
The MNIST dataset contains 60,000 training instances, while the CIFAR10 dataset contains 50,000 training instances.
Both datasets also contain 10,000 test instanes.
From the original training instances, we held out randomly selected 10,000 instances for the validation set, and used the remaining instances as the training set.
Thus, in the experiment, we used 50,000 instances in MNIST and 40,000 instances in CIFAR10 for training, and the held out 10,000 instances for validation.

\paragraph{Models}
We used convolutional neural networks (CNNs) in the experiment.
The network structures can be found in \figurename~\ref{fig:cnn}.
In SGD, we set the epochs $K=20$, batch size $|S_t| = 64$, and learning rate $\eta_t = 0.05$.
In the training, we used a simple data augmentation.
For MNIST, we applied horizontal and vertical shifts in $\pm 2$ pixels.
For CIFAR10, we applied horizontal and vertical shifts in $\pm 4$ pixels and horizontal flipping.

\begin{figure}[t]
    \centering
    \begin{tikzpicture}
        \node (input) at (0,0) [draw,thick,minimum width=2cm,minimum height=0.5cm] {input $x \in \mathbb{R}^{28 \times 28 \times 1}$};
        \node (conv1) [draw,thick,minimum width=2cm,minimum height=0.5cm, above=0.3cm of input] {Conv2D: size=$5 \times 5 \times 1$, \# of channels = $20$};
        \node (relu1) [draw,thick,minimum width=2cm,minimum height=0.5cm, above=0.3cm of conv1] {ReLU};
        \node (pool1) [draw,thick,minimum width=2cm,minimum height=0.5cm, above=0.3cm of relu1] {MaxPool2D: size = $2 \times 2$};
        \node (conv2) [draw,thick,minimum width=2cm,minimum height=0.5cm, above=0.3cm of pool1] {Conv2D: size=$5 \times 5 \times 1$, \# of channels = $20$};
        \node (relu2) [draw,thick,minimum width=2cm,minimum height=0.5cm, above=0.3cm of conv2] {ReLU};
        \node (pool2) [draw,thick,minimum width=2cm,minimum height=0.5cm, above=0.3cm of relu2] {MaxPool2D: size = $2 \times 2$};
        \node (flatten) [draw,thick,minimum width=2cm,minimum height=0.5cm, above=0.3cm of pool2] {Flatten};
        \node (fc) [draw,thick,minimum width=2cm,minimum height=0.5cm, above=0.3cm of flatten] {Fully Connected: size = $320 \times 10$};
        \node (softmax) [draw,thick,minimum width=2cm,minimum height=0.5cm, above=0.3cm of fc] {Softmax};
        \node (output) [draw,thick,minimum width=2cm,minimum height=0.5cm, above=0.3cm of softmax] {output $y \in \mathbb{R}^{10}$};
        \draw [->] (input.north) -- (conv1.south);
        \draw [->] (conv1.north) -- (relu1.south);
        \draw [->] (relu1.north) -- (pool1.south);
        \draw [->] (pool1.north) -- (conv2.south);
        \draw [->] (conv2.north) -- (relu2.south);
        \draw [->] (relu2.north) -- (pool2.south);
        \draw [->] (pool2.north) -- (flatten.south);
        \draw [->] (flatten.north) -- (fc.south);
        \draw [->] (fc.north) -- (softmax.south);
        \draw [->] (softmax.north) -- (output.south);
        \node [below=0.5cm of input.south] {(a) CNN for MNIST};
    %\end{tikzpicture}
    %\begin{tikzpicture}
    
        % first block
        \node (input) at (8,0) [draw,thick,minimum width=2cm,minimum height=0.5cm] {input $x \in \mathbb{R}^{32 \times 32 \times 3}$};
        \node (conv11) [draw,thick,minimum width=2cm,minimum height=0.5cm, above=0.3cm of input] {Conv2D: size=$3 \times 3 \times 3$, \# of channels = $32$};
        \node (bn1) [draw,thick,minimum width=2cm,minimum height=0.5cm, above=0.3cm of conv11] {BatchNorm2D};
        \node (relu11) [draw,thick,minimum width=2cm,minimum height=0.5cm, above=0.3cm of bn1] {ReLU};
        \node (conv12) [draw,thick,minimum width=2cm,minimum height=0.5cm, above=0.3cm of relu11] {Conv2D: size=$3 \times 3 \times 32$, \# of channels = $32$};
        \node (relu12) [draw,thick,minimum width=2cm,minimum height=0.5cm, above=0.3cm of conv12] {ReLU};
        \node (pool1) [draw,thick,minimum width=2cm,minimum height=0.5cm, above=0.3cm of relu12] {MaxPool2D: size = $2 \times 2$};
        
        % second block
        \node (conv21) [draw,thick,minimum width=2cm,minimum height=0.5cm, above=0.3cm of pool1] {Conv2D: size=$3 \times 3 \times 32$, \# of channels = $64$};
        \node (bn2) [draw,thick,minimum width=2cm,minimum height=0.5cm, above=0.3cm of conv21] {BatchNorm2D};
        \node (relu21) [draw,thick,minimum width=2cm,minimum height=0.5cm, above=0.3cm of bn2] {ReLU};
        \node (conv22) [draw,thick,minimum width=2cm,minimum height=0.5cm, above=0.3cm of relu21] {Conv2D: size=$3 \times 3 \times 64$, \# of channels = $64$};
        \node (relu22) [draw,thick,minimum width=2cm,minimum height=0.5cm, above=0.3cm of conv22] {ReLU};
        \node (pool2) [draw,thick,minimum width=2cm,minimum height=0.5cm, above=0.3cm of relu22] {MaxPool2D: size = $2 \times 2$};
        
        % third block
        \node (conv31) [draw,thick,minimum width=2cm,minimum height=0.5cm, above=0.3cm of pool2] {Conv2D: size=$3 \times 3 \times 64$, \# of channels = $128$};
        \node (bn3) [draw,thick,minimum width=2cm,minimum height=0.5cm, above=0.3cm of conv31] {BatchNorm2D};
        \node (relu31) [draw,thick,minimum width=2cm,minimum height=0.5cm, above=0.3cm of bn3] {ReLU};
        \node (conv32) [draw,thick,minimum width=2cm,minimum height=0.5cm, above=0.3cm of relu31] {Conv2D: size=$3 \times 3 \times 128$, \# of channels = $128$};
        \node (relu32) [draw,thick,minimum width=2cm,minimum height=0.5cm, above=0.3cm of conv32] {ReLU};
        \node (pool3) [draw,thick,minimum width=2cm,minimum height=0.5cm, above=0.3cm of relu32] {MaxPool2D: size = $2 \times 2$};
        
        \node (flatten) [draw,thick,minimum width=2cm,minimum height=0.5cm, above=0.3cm of pool3] {Flatten};
        \node (fc) [draw,thick,minimum width=2cm,minimum height=0.5cm, above=0.3cm of flatten] {Fully Connected: size = $2048 \times 10$};
        \node (softmax) [draw,thick,minimum width=2cm,minimum height=0.5cm, above=0.3cm of fc] {Softmax};
        \node (output) [draw,thick,minimum width=2cm,minimum height=0.5cm, above=0.3cm of softmax] {output $y \in \mathbb{R}^{10}$};
        \draw [->] (input.north) -- (conv11.south);
        \draw [->] (conv11.north) -- (bn1.south);
        \draw [->] (bn1.north) -- (relu11.south);
        \draw [->] (relu11.north) -- (conv12.south);
        \draw [->] (conv12.north) -- (relu12.south);
        \draw [->] (relu12.north) -- (pool1.south);
        \draw [->] (pool1.north) -- (conv21.south);
        \draw [->] (conv21.north) -- (bn2.south);
        \draw [->] (bn2.north) -- (relu21.south);
        \draw [->] (relu21.north) -- (conv22.south);
        \draw [->] (conv22.north) -- (relu22.south);
        \draw [->] (relu22.north) -- (pool2.south);
        \draw [->] (pool2.north) -- (conv31.south);
        \draw [->] (conv31.north) -- (bn3.south);
        \draw [->] (bn3.north) -- (relu31.south);
        \draw [->] (relu31.north) -- (conv32.south);
        \draw [->] (conv32.north) -- (relu32.south);
        \draw [->] (relu32.north) -- (pool3.south);
        \draw [->] (pool3.north) -- (flatten.south);
        \draw [->] (flatten.north) -- (fc.south);
        \draw [->] (fc.north) -- (softmax.south);
        \draw [->] (softmax.north) -- (output.south);
        \node [below=0.5cm of input.south] {(b) CNN for CIFAR10};
    \end{tikzpicture}
    \caption{Structures of convolutional neural networks (CNNs)}
    \label{fig:cnn}
\end{figure}
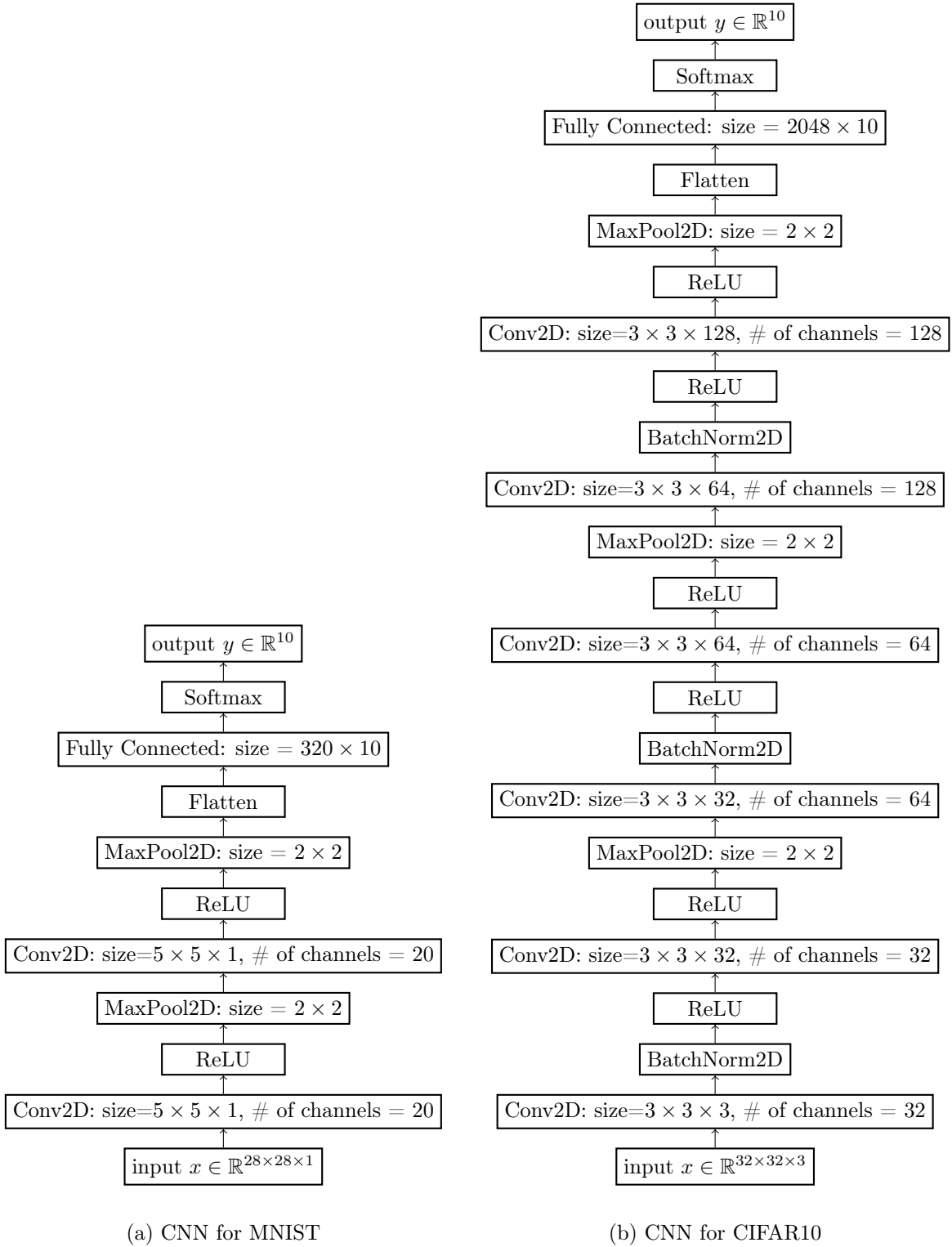

\paragraph{Target Linear Influence}
We set the query vector $u$ as $u = \frac{1}{|D'|}\sum_{z' \in D'} \nabla_\theta \ell(z'; \theta^{[T]})$.
Estimation of the linear influence thus amounts to estimating the change in the validation loss
\begin{align*}
    \langle u, \theta_{-j}^{[T]} - \theta^{[T]} \rangle \approx \frac{1}{|D'|}\sum_{z' \in D'} \left( \ell(z'; \theta_{-j}^{[T]}) - \ell(z'; \theta^{[T]}) \right).
\end{align*}
We note that the instances with large negative linear influences are deemed to be negatively affecting the resulting models.
Removing such instances can improve the validation loss, and thus the users can prioritize the inspection of such instances.

\paragraph{Baseline Methods}
For K\&L~\citep{koh2017understanding}, to solve the problem (\ref{eq:kl_u}), we ran momentum-SGD for two epochs, where we set the learning rate to be $0.005$, the size of momentum to be $0.9$, and the batch size to be $1000$.
As baselines for data cleansing, in addition to K\&L~\citep{koh2017understanding}, we also adopted two outlier detection methods, Autoencoder~\citep{aggarwal2016outlier} and Isolation Forest~\citep{liu2008isolation}.
In outlier detection, we treated the validation set as a healthy dataset.
We then computed outlierness of each training instance using outlier detection methods, as follows.
\begin{itemize}
    \item \textbf{Autoencoder}: We trained an autoencoder using the validation set. See \figurename~\ref{fig:ae} for the structures of autoencoders used. We adopted the squared loss as the training objective function. For training, we used Adam with the learning rate set to $0.001$ and the batch size set to $128$. We used the same data augmentation as the training of CNNs. After the autoencoder is trained, we fed each training input $x$ into the autoencoder and obtained an reconstructed input $\hat{x}$. We measured the outlierness of the input $x$ by $a = \|x - \hat{x}\|^2$.
    \item \textbf{Isolation Forest}: We first fed each validation input $x'$ into the trained CNN, and obtained its latent representation $r'$ from the flatten layer in \figurename~\ref{fig:cnn}. We trained an isolation forest using the latent representations of the validation set. In the experiment, we used the \texttt{fit} method of \texttt{sklearn.ensemble.IsolationForest} with default configurations. After the isolation forest is trained, we fed each training input $x$ into the isolation forest and obtained its outlierness score $a$ using the \texttt{score\_samples} method.
\end{itemize}
We also adopted random data removal as the baseline.

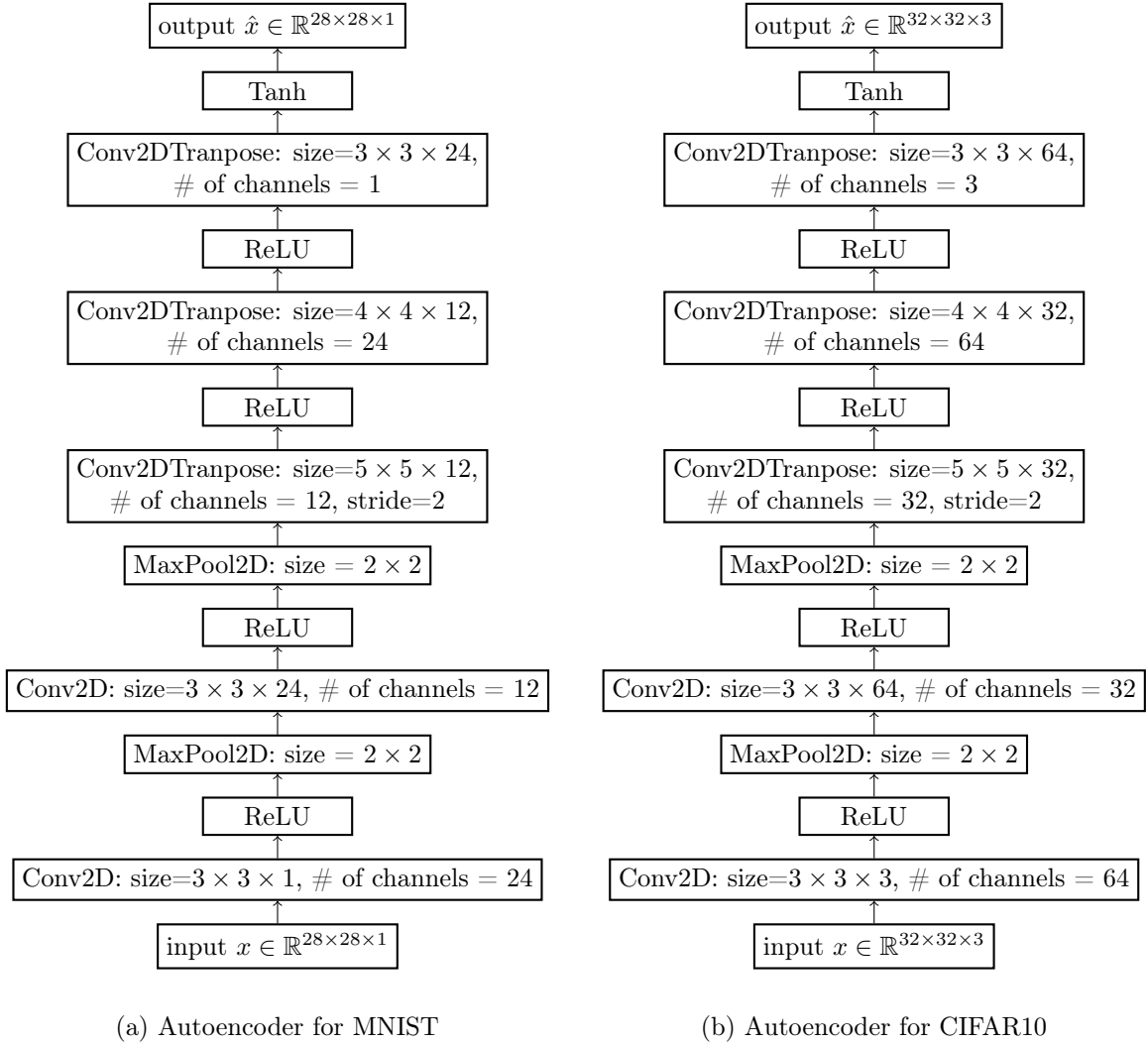
\begin{figure}[t]
    \centering
    \begin{tikzpicture}
        \node (input) at (0,0) [draw,thick,minimum width=2cm,minimum height=0.5cm] {input $x \in \mathbb{R}^{28 \times 28 \times 1}$};
        \node (conv1) [draw,thick,minimum width=2cm,minimum height=0.5cm, above=0.3cm of input] {Conv2D: size=$3 \times 3 \times 1$, \# of channels = $24$};
        \node (relu1) [draw,thick,minimum width=2cm,minimum height=0.5cm, above=0.3cm of conv1] {ReLU};
        \node (pool1) [draw,thick,minimum width=2cm,minimum height=0.5cm, above=0.3cm of relu1] {MaxPool2D: size = $2 \times 2$};
        \node (conv2) [draw,thick,minimum width=2cm,minimum height=0.5cm, above=0.3cm of pool1] {Conv2D: size=$3 \times 3 \times 24$, \# of channels = $12$};
        \node (relu2) [draw,thick,minimum width=2cm,minimum height=0.5cm, above=0.3cm of conv2] {ReLU};
        \node (pool2) [draw,thick,minimum width=2cm,minimum height=0.5cm, above=0.3cm of relu2] {MaxPool2D: size = $2 \times 2$};
        \node (conv2dec) [draw,thick,minimum width=2cm,minimum height=0.5cm, above=0.3cm of pool2, align=center, align=center] {Conv2DTranpose: size=$5 \times 5 \times 12$, \\ \# of channels = $12$, stride=$2$};
        \node (relu2dec) [draw,thick,minimum width=2cm,minimum height=0.5cm, above=0.3cm of conv2dec] {ReLU};
        \node (conv1dec) [draw,thick,minimum width=2cm,minimum height=0.5cm, above=0.3cm of relu2dec, align=center] {Conv2DTranpose: size=$4 \times 4 \times 12$, \\ \# of channels = $24$};
        \node (relu1dec) [draw,thick,minimum width=2cm,minimum height=0.5cm, above=0.3cm of conv1dec] {ReLU};
        \node (conv0dec) [draw,thick,minimum width=2cm,minimum height=0.5cm, above=0.3cm of relu1dec, align=center] {Conv2DTranpose: size=$3 \times 3 \times 24$, \\ \# of channels = $1$};
        \node (tanh) [draw,thick,minimum width=2cm,minimum height=0.5cm, above=0.3cm of conv0dec] {Tanh};
        \node (output) [draw,thick,minimum width=2cm,minimum height=0.5cm, above=0.3cm of tanh] {output $\hat{x} \in \mathbb{R}^{28 \times 28 \times 1}$};
        \draw [->] (input.north) -- (conv1.south);
        \draw [->] (conv1.north) -- (relu1.south);
        \draw [->] (relu1.north) -- (pool1.south);
        \draw [->] (pool1.north) -- (conv2.south);
        \draw [->] (conv2.north) -- (relu2.south);
        \draw [->] (relu2.north) -- (pool2.south);
        \draw [->] (pool2.north) -- (conv2dec.south);
        \draw [->] (conv2dec.north) -- (relu2dec.south);
        \draw [->] (relu2dec.north) -- (conv1dec.south);
        \draw [->] (conv1dec.north) -- (relu1dec.south);
        \draw [->] (relu1dec.north) -- (conv0dec.south);
        \draw [->] (conv0dec.north) -- (tanh.south);
        \draw [->] (tanh.north) -- (output.south);
        \node [below=0.5cm of input.south] {(a) Autoencoder for MNIST};
    %\end{tikzpicture}
    %\begin{tikzpicture}
        \node (input) at (8,0) [draw,thick,minimum width=2cm,minimum height=0.5cm] {input $x \in \mathbb{R}^{32 \times 32 \times 3}$};
        \node (conv1) [draw,thick,minimum width=2cm,minimum height=0.5cm, above=0.3cm of input] {Conv2D: size=$3 \times 3 \times 3$, \# of channels = $64$};
        \node (relu1) [draw,thick,minimum width=2cm,minimum height=0.5cm, above=0.3cm of conv1] {ReLU};
        \node (pool1) [draw,thick,minimum width=2cm,minimum height=0.5cm, above=0.3cm of relu1] {MaxPool2D: size = $2 \times 2$};
        \node (conv2) [draw,thick,minimum width=2cm,minimum height=0.5cm, above=0.3cm of pool1] {Conv2D: size=$3 \times 3 \times 64$, \# of channels = $32$};
        \node (relu2) [draw,thick,minimum width=2cm,minimum height=0.5cm, above=0.3cm of conv2] {ReLU};
        \node (pool2) [draw,thick,minimum width=2cm,minimum height=0.5cm, above=0.3cm of relu2] {MaxPool2D: size = $2 \times 2$};
        \node (conv2dec) [draw,thick,minimum width=2cm,minimum height=0.5cm, above=0.3cm of pool2, align=center] {Conv2DTranpose: size=$5 \times 5 \times 32$, \\ \# of channels = $32$,  stride=$2$};
        \node (relu2dec) [draw,thick,minimum width=2cm,minimum height=0.5cm, above=0.3cm of conv2dec] {ReLU};
        \node (conv1dec) [draw,thick,minimum width=2cm,minimum height=0.5cm, above=0.3cm of relu2dec, align=center] {Conv2DTranpose: size=$4 \times 4 \times 32$, \\ \# of channels = $64$};
        \node (relu1dec) [draw,thick,minimum width=2cm,minimum height=0.5cm, above=0.3cm of conv1dec] {ReLU};
        \node (conv0dec) [draw,thick,minimum width=2cm,minimum height=0.5cm, above=0.3cm of relu1dec, align=center] {Conv2DTranpose: size=$3 \times 3 \times 64$, \\ \# of channels = $3$};
        \node (tanh) [draw,thick,minimum width=2cm,minimum height=0.5cm, above=0.3cm of conv0dec] {Tanh};
        \node (output) [draw,thick,minimum width=2cm,minimum height=0.5cm, above=0.3cm of tanh] {output $\hat{x} \in \mathbb{R}^{32 \times 32 \times 3}$};
        \draw [->] (input.north) -- (conv1.south);
        \draw [->] (conv1.north) -- (relu1.south);
        \draw [->] (relu1.north) -- (pool1.south);
        \draw [->] (pool1.north) -- (conv2.south);
        \draw [->] (conv2.north) -- (relu2.south);
        \draw [->] (relu2.north) -- (pool2.south);
        \draw [->] (pool2.north) -- (conv2dec.south);
        \draw [->] (conv2dec.north) -- (relu2dec.south);
        \draw [->] (relu2dec.north) -- (conv1dec.south);
        \draw [->] (conv1dec.north) -- (relu1dec.south);
        \draw [->] (relu1dec.north) -- (conv0dec.south);
        \draw [->] (conv0dec.north) -- (tanh.south);
        \draw [->] (tanh.north) -- (output.south);
        \node [below=0.5cm of input.south] {(b) Autoencoder for CIFAR10};
    \end{tikzpicture}
    \caption{Structures of Autoencoders}
    \label{fig:ae}
\end{figure}

\paragraph{Proposed Method}
For the proposed method, we introduced an approximate version in this experiment.
In Algorithm~\ref{alg:lie_infer}, the proposed method retraces the entire SGD steps.
In the approximate version, we retrace only one epoch, which requires less computation than the original algorithm.
Moreover, it is also storage friendly because we need to store intermediate information only in the last epoch of SGD.

\paragraph{Procedure}
We proceeded the experiment as follows.
First, we trained the model with SGD using the training set.
We then computed the influence of each training instance using the proposed method as well as the other baseline methods.
Finally, we removed the top-$m$ influential instances from the training set and retrained the model.
For the model ratraining, we considered the two settings.
\begin{itemize}
    \item Retrain All: In this setting, we ran counterfactual SGD for all the 20 epochs with influential instances omitted.
    \item Retrain Last: In this setting, we ran normal SGD for 19 epochs and switched to counterfactual SGD in the last epoch with influential instances omitted.
\end{itemize}
If the misclassification rate of the retrained model decreases, we can conclude that the training set was effectively cleansed.

%%%%%%%%%%%%%%%%%%%%%%%%%%%%%%%%%%%%%%%%%%%%%%%%%%%%%%%%%%%%%%%%%%%%
\clearpage
\subsection{Full Results in Section \ref{sec:exp_cleans}}
\label{app:exp_cleans_res}

The full results for MNIST are shown in Figures~\ref{fig:app_mis_mnist} and \ref{fig:app_mis_mnist_exhaustive}.
The full results for CIFAR10 are shown in Figures \ref{fig:app_mis_cifar10} and \ref{fig:app_mis_cifar10_exhaustive}.
In the figures, it is evident that the misclassification rates have decreased after data cleansing with the proposed method and its approximate version.
We compared the misclassification rates before and after the data cleansing using t-test with the significance level set to $0.05$.
We observed that none of the baseline methods except K\&L attained statistically significant improvements.
By contrast, the proposed method and its approximate version attained statistically significant improvements.
For both datasets, the proposed method and its approximate version were found to be statistically significant for the number of removed instances between 10 and 1000, and 10 and 100, respectively.
Moreover, both methods outperformed K\&L.
\figurename~\ref{fig:compare} also confirms the effectiveness of the data cleansing with the proposed method.
Out of 30 experiments, the misclassification rates decreased with the proposed method for 25 cases in MNIST, and for 26 cases in CIFAR10.
These results confirm that the proposed method can effectively suggest influential instances for data cleansing.
We also note that the proposed method and its approximation version performed comparably well.
This observation suggests that, in practice, we only need to trace back only one epoch for inferring influential instances, which requires less computation and storing intermediate information only in the last epoch of SGD.

Figures~\ref{fig:example_mnist} and \ref{fig:example_cifar10} show the examples of found influential instances.
An interesting observation is that Autoencoder tended to find images with noisy or vivid backgrounds.
Visually, it seems reasonable to select them as outliers.
However, as we have seen in \figurename~\ref{fig:mis}, removing these outliers did not help improving the models.
On the other hand, the proposed method found images with confusing shapes or backgrounds.
Although they are not strongly visually appealing as outliers, \figurename~\ref{fig:mis} confirms that these instances have high impacts to the models.
These observations indicate that the proposed method could find influential instances, which can be missed even by users with domain knowledge.

\begin{figure}[t]
\begin{tikzpicture}
\centering
\begin{axis}[
name=axis1,
width=0.45\textwidth,
height=120pt,
xmode=log,
xmin=1,
xmax=10000,
ymin=0.007,
ymax=0.011,
xlabel=\# of instances removed,
ylabel=Misclassification rate,
label style={font=\footnotesize},
scaled ticks=false,
tick label style={
	/pgf/number format/fixed,
    /pgf/number format/precision=4,
    font=\scriptsize
},
legend style={font=\fontsize{7}{5}\selectfont},
legend style={at={(1.1,1.08)},anchor=south,legend columns=-1}
]
\addplot[black, densely dotted, line width=0.5mm] table[x=k, y=baseline, col sep=comma]{./figs/mnist_miss.csv};
\addplot[magenta, mark=diamond*, line width=0.3mm] table[x=k, y=random, col sep=comma]{./figs/mnist_miss.csv};
\addplot[color={rgb:red,2;green,4;blue,2}, line width=0.3mm] table[x=k, y=ae, col sep=comma]{./figs/mnist_miss.csv};
\addplot[color={rgb:red,2;green,4;blue,2}, densely dotted, line width=0.3mm] table[x=k, y=iso, col sep=comma]{./figs/mnist_miss.csv};
\addplot[blue, mark=*, mark size=1.5pt, line width=0.3mm] table[x=k, y=sgd_all, col sep=comma]{./figs/mnist_miss.csv};
\addplot[blue, mark options=solid, densely dotted, mark=triangle*, mark size=2pt, line width=0.3mm] table[x=k, y=sgd_last, col sep=comma]{./figs/mnist_miss.csv};
\addplot[red, mark=square*, mark size=1.5pt, line width=0.3mm] table[x=k, y=icml, col sep=comma]{./figs/mnist_miss.csv};
\addplot[black, densely dotted, line width=0.5mm] table[x=k, y=baseline, col sep=comma]{./figs/mnist_miss.csv};
\legend{No Removal, Random, Autoencoder, Isolation Forest, Proposed, Proposed (Approx.), K\&L}
\end{axis}
\begin{axis}[
xshift=2.7in,
name=axis2,
width=0.45\textwidth,
height=120pt,
xmode=log,
xmin=1,
xmax=10000,
ymin=0.007,
ymax=0.011,
xlabel=\# of instances removed,
ylabel=Misclassification rate,
label style={font=\footnotesize},
scaled ticks=false,
tick label style={
	/pgf/number format/fixed,
    /pgf/number format/precision=4,
    font=\scriptsize
},
legend style={font=\fontsize{7}{5}\selectfont},
legend pos=north west
]
\addplot[black, densely dotted, line width=0.5mm] table[x=k, y=baseline, col sep=comma]{./figs/mnist_miss_all.csv};
\addplot[magenta, mark=diamond*, line width=0.3mm] table[x=k, y=random, col sep=comma]{./figs/mnist_miss_all.csv};
\addplot[color={rgb:red,2;green,4;blue,2}, line width=0.3mm] table[x=k, y=ae, col sep=comma]{./figs/mnist_miss_all.csv};
\addplot[color={rgb:red,2;green,4;blue,2}, densely dotted, line width=0.3mm] table[x=k, y=iso, col sep=comma]{./figs/mnist_miss_all.csv};
\addplot[blue, mark=*, mark size=1.5pt, line width=0.3mm] table[x=k, y=sgd_all, col sep=comma]{./figs/mnist_miss_all.csv};
\addplot[blue, mark options=solid, densely dotted, mark=triangle*, mark size=2pt, line width=0.3mm] table[x=k, y=sgd_last, col sep=comma]{./figs/mnist_miss_all.csv};
\addplot[red, mark=square*, mark size=1.5pt, line width=0.3mm] table[x=k, y=icml, col sep=comma]{./figs/mnist_miss_all.csv};
\addplot[black, densely dotted, line width=0.5mm] table[x=k, y=baseline, col sep=comma]{./figs/mnist_miss_all.csv};
\end{axis}
\node [below=1cm] at (axis1.south) {\footnotesize (a) MNIST: Retrain Last};
\node [below=1cm] at (axis2.south) {\footnotesize (b) MNIST: Retrain All};
\end{tikzpicture}
\caption{MNIST: Average misclassification rates on the test set after data cleansing over 30 experiments.}
\label{fig:app_mis_mnist}
%\end{figure}
%
%\begin{figure}
\vspace{12pt}
\centering
\begin{tikzpicture}
\node [anchor=center] at (0,0) [draw,minimum width=9cm,minimum height=0.5cm] {};
\node at (-2.5, 0) {\footnotesize No Removal};
\draw[dashed] (-4.0, 0) -- (-3.5, 0);
\node at (0.5, 0) {\footnotesize Retrain All};
\draw[red] (-1.0, 0) -- (-0.5, 0);
\fill[red] (-0.75,0.12) -- (-0.65,-0.06) -- (-0.85,-0.06) -- cycle;
\node at (3.5, 0) {\footnotesize Retrain Last};
\draw[blue] (2.0, 0) -- (2.5, 0);
\fill [blue] (2.25, 0) circle [radius=0.08];
\end{tikzpicture}
\\
  \def\xs{0}%
  \def\ys{0}%
  \def\key{random}%
  \def\leg{(a) Random}%
  \begin{tikzpicture}
\begin{axis}[
name=axis,
xshift=\xs cm,
yshift=\ys cm,
width=0.45\textwidth,
height=120pt,
xmode=log,
xmin=1,
xmax=10000,
ymin=0.006,
ymax=0.012,
xlabel=\# of instances removed,
ylabel=Misclassification rate,
label style={font=\footnotesize},
scaled ticks=false,
tick label style={
	/pgf/number format/fixed,
    /pgf/number format/precision=4,
    font=\scriptsize
},
legend style={font=\fontsize{7}{5}\selectfont},
legend style={at={(0,1.08)},anchor=south,legend columns=-1}
]
\addplot[name path=b_upper,draw=none,forget plot] table[x=k,y expr=\thisrow{baseline}+\thisrow{baseline_std}, col sep=comma] {./figs/mnist_miss.csv};
\addplot[name path=b_lower,draw=none,forget plot] table[x=k,y expr=\thisrow{baseline}-\thisrow{baseline_std}, col sep=comma] {./figs/mnist_miss.csv};
\addplot [fill=black!30,opacity=0.3,forget plot] fill between[of=b_upper and b_lower];

\addplot[name path=all_upper,draw=none,forget plot] table[x=k,y expr=\thisrow{\key}+\thisrow{\key_std}, col sep=comma] {./figs/mnist_miss_all.csv};
\addplot[name path=all_lower,draw=none,forget plot] table[x=k,y expr=\thisrow{\key}-\thisrow{\key_std}, col sep=comma] {./figs/mnist_miss_all.csv};
\addplot [fill=red!30,opacity=0.7,forget plot] fill between[of=all_upper and all_lower];

\addplot[name path=last_upper,draw=none,forget plot] table[x=k,y expr=\thisrow{\key}+\thisrow{\key_std}, col sep=comma] {./figs/mnist_miss.csv};
\addplot[name path=last_lower,draw=none,forget plot] table[x=k,y expr=\thisrow{\key}-\thisrow{\key_std}, col sep=comma] {./figs/mnist_miss.csv};
\addplot [fill=blue!30,opacity=0.5,forget plot] fill between[of=last_upper and last_lower];

\addplot[black, dashed] table[x=k, y=baseline, col sep=comma]{./figs/mnist_miss.csv};
\addplot[red, mark=triangle*, mark size=3pt] table[x=k, y=\key, col sep=comma]{./figs/mnist_miss_all.csv};
\addplot[blue, mark=*, mark size=2pt] table[x=k, y=\key, col sep=comma]{./figs/mnist_miss.csv};
\end{axis}
\node[below=1.1cm of axis.south] {\leg};
\end{tikzpicture}%

  \def\xs{0}%
  \def\ys{0}%
  \def\key{icml}%
  \def\leg{(b) K\&L}%
  \begin{tikzpicture}
\begin{axis}[
name=axis,
xshift=\xs cm,
yshift=\ys cm,
width=0.45\textwidth,
height=120pt,
xmode=log,
xmin=1,
xmax=10000,
ymin=0.006,
ymax=0.012,
xlabel=\# of instances removed,
ylabel=Misclassification rate,
label style={font=\footnotesize},
scaled ticks=false,
tick label style={
	/pgf/number format/fixed,
    /pgf/number format/precision=4,
    font=\scriptsize
},
legend style={font=\fontsize{7}{5}\selectfont},
legend style={at={(0,1.08)},anchor=south,legend columns=-1}
]
\addplot[name path=b_upper,draw=none,forget plot] table[x=k,y expr=\thisrow{baseline}+\thisrow{baseline_std}, col sep=comma] {./figs/mnist_miss.csv};
\addplot[name path=b_lower,draw=none,forget plot] table[x=k,y expr=\thisrow{baseline}-\thisrow{baseline_std}, col sep=comma] {./figs/mnist_miss.csv};
\addplot [fill=black!30,opacity=0.3,forget plot] fill between[of=b_upper and b_lower];

\addplot[name path=all_upper,draw=none,forget plot] table[x=k,y expr=\thisrow{\key}+\thisrow{\key_std}, col sep=comma] {./figs/mnist_miss_all.csv};
\addplot[name path=all_lower,draw=none,forget plot] table[x=k,y expr=\thisrow{\key}-\thisrow{\key_std}, col sep=comma] {./figs/mnist_miss_all.csv};
\addplot [fill=red!30,opacity=0.7,forget plot] fill between[of=all_upper and all_lower];

\addplot[name path=last_upper,draw=none,forget plot] table[x=k,y expr=\thisrow{\key}+\thisrow{\key_std}, col sep=comma] {./figs/mnist_miss.csv};
\addplot[name path=last_lower,draw=none,forget plot] table[x=k,y expr=\thisrow{\key}-\thisrow{\key_std}, col sep=comma] {./figs/mnist_miss.csv};
\addplot [fill=blue!30,opacity=0.5,forget plot] fill between[of=last_upper and last_lower];

\addplot[black, dashed] table[x=k, y=baseline, col sep=comma]{./figs/mnist_miss.csv};
\addplot[red, mark=triangle*, mark size=3pt] table[x=k, y=\key, col sep=comma]{./figs/mnist_miss_all.csv};
\addplot[blue, mark=*, mark size=2pt] table[x=k, y=\key, col sep=comma]{./figs/mnist_miss.csv};
\end{axis}
\node[below=1.1cm of axis.south] {\leg};
\end{tikzpicture}%
\\
  \def\xs{0}%
  \def\ys{0}%
  \def\key{ae}%
  \def\leg{(c) Autoencoder}%
  \begin{tikzpicture}
\begin{axis}[
name=axis,
xshift=\xs cm,
yshift=\ys cm,
width=0.45\textwidth,
height=120pt,
xmode=log,
xmin=1,
xmax=10000,
ymin=0.006,
ymax=0.012,
xlabel=\# of instances removed,
ylabel=Misclassification rate,
label style={font=\footnotesize},
scaled ticks=false,
tick label style={
	/pgf/number format/fixed,
    /pgf/number format/precision=4,
    font=\scriptsize
},
legend style={font=\fontsize{7}{5}\selectfont},
legend style={at={(0,1.08)},anchor=south,legend columns=-1}
]
\addplot[name path=b_upper,draw=none,forget plot] table[x=k,y expr=\thisrow{baseline}+\thisrow{baseline_std}, col sep=comma] {./figs/mnist_miss.csv};
\addplot[name path=b_lower,draw=none,forget plot] table[x=k,y expr=\thisrow{baseline}-\thisrow{baseline_std}, col sep=comma] {./figs/mnist_miss.csv};
\addplot [fill=black!30,opacity=0.3,forget plot] fill between[of=b_upper and b_lower];

\addplot[name path=all_upper,draw=none,forget plot] table[x=k,y expr=\thisrow{\key}+\thisrow{\key_std}, col sep=comma] {./figs/mnist_miss_all.csv};
\addplot[name path=all_lower,draw=none,forget plot] table[x=k,y expr=\thisrow{\key}-\thisrow{\key_std}, col sep=comma] {./figs/mnist_miss_all.csv};
\addplot [fill=red!30,opacity=0.7,forget plot] fill between[of=all_upper and all_lower];

\addplot[name path=last_upper,draw=none,forget plot] table[x=k,y expr=\thisrow{\key}+\thisrow{\key_std}, col sep=comma] {./figs/mnist_miss.csv};
\addplot[name path=last_lower,draw=none,forget plot] table[x=k,y expr=\thisrow{\key}-\thisrow{\key_std}, col sep=comma] {./figs/mnist_miss.csv};
\addplot [fill=blue!30,opacity=0.5,forget plot] fill between[of=last_upper and last_lower];

\addplot[black, dashed] table[x=k, y=baseline, col sep=comma]{./figs/mnist_miss.csv};
\addplot[red, mark=triangle*, mark size=3pt] table[x=k, y=\key, col sep=comma]{./figs/mnist_miss_all.csv};
\addplot[blue, mark=*, mark size=2pt] table[x=k, y=\key, col sep=comma]{./figs/mnist_miss.csv};
\end{axis}
\node[below=1.1cm of axis.south] {\leg};
\end{tikzpicture}%

  \def\xs{0}%
  \def\ys{0}%
  \def\key{iso}%
  \def\leg{(d) Isolation Forest}%
  \begin{tikzpicture}
\begin{axis}[
name=axis,
xshift=\xs cm,
yshift=\ys cm,
width=0.45\textwidth,
height=120pt,
xmode=log,
xmin=1,
xmax=10000,
ymin=0.006,
ymax=0.012,
xlabel=\# of instances removed,
ylabel=Misclassification rate,
label style={font=\footnotesize},
scaled ticks=false,
tick label style={
	/pgf/number format/fixed,
    /pgf/number format/precision=4,
    font=\scriptsize
},
legend style={font=\fontsize{7}{5}\selectfont},
legend style={at={(0,1.08)},anchor=south,legend columns=-1}
]
\addplot[name path=b_upper,draw=none,forget plot] table[x=k,y expr=\thisrow{baseline}+\thisrow{baseline_std}, col sep=comma] {./figs/mnist_miss.csv};
\addplot[name path=b_lower,draw=none,forget plot] table[x=k,y expr=\thisrow{baseline}-\thisrow{baseline_std}, col sep=comma] {./figs/mnist_miss.csv};
\addplot [fill=black!30,opacity=0.3,forget plot] fill between[of=b_upper and b_lower];

\addplot[name path=all_upper,draw=none,forget plot] table[x=k,y expr=\thisrow{\key}+\thisrow{\key_std}, col sep=comma] {./figs/mnist_miss_all.csv};
\addplot[name path=all_lower,draw=none,forget plot] table[x=k,y expr=\thisrow{\key}-\thisrow{\key_std}, col sep=comma] {./figs/mnist_miss_all.csv};
\addplot [fill=red!30,opacity=0.7,forget plot] fill between[of=all_upper and all_lower];

\addplot[name path=last_upper,draw=none,forget plot] table[x=k,y expr=\thisrow{\key}+\thisrow{\key_std}, col sep=comma] {./figs/mnist_miss.csv};
\addplot[name path=last_lower,draw=none,forget plot] table[x=k,y expr=\thisrow{\key}-\thisrow{\key_std}, col sep=comma] {./figs/mnist_miss.csv};
\addplot [fill=blue!30,opacity=0.5,forget plot] fill between[of=last_upper and last_lower];

\addplot[black, dashed] table[x=k, y=baseline, col sep=comma]{./figs/mnist_miss.csv};
\addplot[red, mark=triangle*, mark size=3pt] table[x=k, y=\key, col sep=comma]{./figs/mnist_miss_all.csv};
\addplot[blue, mark=*, mark size=2pt] table[x=k, y=\key, col sep=comma]{./figs/mnist_miss.csv};
\end{axis}
\node[below=1.1cm of axis.south] {\leg};
\end{tikzpicture}%
\\
  \def\xs{0}%
  \def\ys{0}%
  \def\key{sgd_all}%
  \def\leg{(e) Proposed}%
  \begin{tikzpicture}
\begin{axis}[
name=axis,
xshift=\xs cm,
yshift=\ys cm,
width=0.45\textwidth,
height=120pt,
xmode=log,
xmin=1,
xmax=10000,
ymin=0.006,
ymax=0.012,
xlabel=\# of instances removed,
ylabel=Misclassification rate,
label style={font=\footnotesize},
scaled ticks=false,
tick label style={
	/pgf/number format/fixed,
    /pgf/number format/precision=4,
    font=\scriptsize
},
legend style={font=\fontsize{7}{5}\selectfont},
legend style={at={(0,1.08)},anchor=south,legend columns=-1}
]
\addplot[name path=b_upper,draw=none,forget plot] table[x=k,y expr=\thisrow{baseline}+\thisrow{baseline_std}, col sep=comma] {./figs/mnist_miss.csv};
\addplot[name path=b_lower,draw=none,forget plot] table[x=k,y expr=\thisrow{baseline}-\thisrow{baseline_std}, col sep=comma] {./figs/mnist_miss.csv};
\addplot [fill=black!30,opacity=0.3,forget plot] fill between[of=b_upper and b_lower];

\addplot[name path=all_upper,draw=none,forget plot] table[x=k,y expr=\thisrow{\key}+\thisrow{\key_std}, col sep=comma] {./figs/mnist_miss_all.csv};
\addplot[name path=all_lower,draw=none,forget plot] table[x=k,y expr=\thisrow{\key}-\thisrow{\key_std}, col sep=comma] {./figs/mnist_miss_all.csv};
\addplot [fill=red!30,opacity=0.7,forget plot] fill between[of=all_upper and all_lower];

\addplot[name path=last_upper,draw=none,forget plot] table[x=k,y expr=\thisrow{\key}+\thisrow{\key_std}, col sep=comma] {./figs/mnist_miss.csv};
\addplot[name path=last_lower,draw=none,forget plot] table[x=k,y expr=\thisrow{\key}-\thisrow{\key_std}, col sep=comma] {./figs/mnist_miss.csv};
\addplot [fill=blue!30,opacity=0.5,forget plot] fill between[of=last_upper and last_lower];

\addplot[black, dashed] table[x=k, y=baseline, col sep=comma]{./figs/mnist_miss.csv};
\addplot[red, mark=triangle*, mark size=3pt] table[x=k, y=\key, col sep=comma]{./figs/mnist_miss_all.csv};
\addplot[blue, mark=*, mark size=2pt] table[x=k, y=\key, col sep=comma]{./figs/mnist_miss.csv};
\end{axis}
\node[below=1.1cm of axis.south] {\leg};
\end{tikzpicture}%

  \def\xs{0}%
  \def\ys{0}%
  \def\key{sgd_last}%
  \def\leg{(f) Proposed (Approx.)}%
  \begin{tikzpicture}
\begin{axis}[
name=axis,
xshift=\xs cm,
yshift=\ys cm,
width=0.45\textwidth,
height=120pt,
xmode=log,
xmin=1,
xmax=10000,
ymin=0.006,
ymax=0.012,
xlabel=\# of instances removed,
ylabel=Misclassification rate,
label style={font=\footnotesize},
scaled ticks=false,
tick label style={
	/pgf/number format/fixed,
    /pgf/number format/precision=4,
    font=\scriptsize
},
legend style={font=\fontsize{7}{5}\selectfont},
legend style={at={(0,1.08)},anchor=south,legend columns=-1}
]
\addplot[name path=b_upper,draw=none,forget plot] table[x=k,y expr=\thisrow{baseline}+\thisrow{baseline_std}, col sep=comma] {./figs/mnist_miss.csv};
\addplot[name path=b_lower,draw=none,forget plot] table[x=k,y expr=\thisrow{baseline}-\thisrow{baseline_std}, col sep=comma] {./figs/mnist_miss.csv};
\addplot [fill=black!30,opacity=0.3,forget plot] fill between[of=b_upper and b_lower];

\addplot[name path=all_upper,draw=none,forget plot] table[x=k,y expr=\thisrow{\key}+\thisrow{\key_std}, col sep=comma] {./figs/mnist_miss_all.csv};
\addplot[name path=all_lower,draw=none,forget plot] table[x=k,y expr=\thisrow{\key}-\thisrow{\key_std}, col sep=comma] {./figs/mnist_miss_all.csv};
\addplot [fill=red!30,opacity=0.7,forget plot] fill between[of=all_upper and all_lower];

\addplot[name path=last_upper,draw=none,forget plot] table[x=k,y expr=\thisrow{\key}+\thisrow{\key_std}, col sep=comma] {./figs/mnist_miss.csv};
\addplot[name path=last_lower,draw=none,forget plot] table[x=k,y expr=\thisrow{\key}-\thisrow{\key_std}, col sep=comma] {./figs/mnist_miss.csv};
\addplot [fill=blue!30,opacity=0.5,forget plot] fill between[of=last_upper and last_lower];

\addplot[black, dashed] table[x=k, y=baseline, col sep=comma]{./figs/mnist_miss.csv};
\addplot[red, mark=triangle*, mark size=3pt] table[x=k, y=\key, col sep=comma]{./figs/mnist_miss_all.csv};
\addplot[blue, mark=*, mark size=2pt] table[x=k, y=\key, col sep=comma]{./figs/mnist_miss.csv};
\end{axis}
\node[below=1.1cm of axis.south] {\leg};
\end{tikzpicture}%

\caption{Exhaustive results on MNIST: [Thick lines] Average misclassification rates on the test set after data cleansing over 30 experiments. [Shaded Regions] Average $\pm$ standard deviation.}
\label{fig:app_mis_mnist_exhaustive}
\end{figure}

\begin{figure}[t]
\begin{tikzpicture}
\centering
\begin{axis}[
name=axis1,
width=0.45\textwidth,
height=120pt,
xmode=log,
xmin=1,
xmax=10000,
ymin=0.15,
ymax=0.19,
xlabel=\# of instances removed,
ylabel=Misclassification rate,
label style={font=\footnotesize},
scaled ticks=false,
tick label style={
	/pgf/number format/fixed,
    /pgf/number format/precision=4,
    font=\scriptsize
},
legend style={font=\fontsize{7}{5}\selectfont},
legend style={at={(1.1,1.08)},anchor=south,legend columns=-1}
]
\addplot[black, densely dotted, line width=0.5mm] table[x=k, y=baseline, col sep=comma]{./figs/cifar10_miss.csv};
\addplot[magenta, mark=diamond*, line width=0.3mm] table[x=k, y=random, col sep=comma]{./figs/cifar10_miss.csv};
\addplot[color={rgb:red,2;green,4;blue,2}, line width=0.3mm] table[x=k, y=ae, col sep=comma]{./figs/cifar10_miss.csv};
\addplot[color={rgb:red,2;green,4;blue,2}, densely dotted, line width=0.3mm] table[x=k, y=iso, col sep=comma]{./figs/cifar10_miss.csv};
\addplot[blue, mark=*, mark size=1.5pt, line width=0.3mm] table[x=k, y=sgd_all, col sep=comma]{./figs/cifar10_miss.csv};
\addplot[blue, mark options=solid, densely dotted, mark=triangle*, mark size=2pt, line width=0.3mm] table[x=k, y=sgd_last, col sep=comma]{./figs/cifar10_miss.csv};
\addplot[red, mark=square*, mark size=1.5pt, line width=0.3mm] table[x=k, y=icml, col sep=comma]{./figs/cifar10_miss.csv};
\addplot[black, densely dotted, line width=0.5mm] table[x=k, y=baseline, col sep=comma]{./figs/cifar10_miss.csv};
\legend{No Removal, Random, Autoencoder, Isolation Forest, Proposed, Proposed (Approx.), K\&L}
\end{axis}
\begin{axis}[
xshift=2.7in,
name=axis2,
width=0.45\textwidth,
height=120pt,
xmode=log,
xmin=1,
xmax=10000,
ymin=0.15,
ymax=0.19,
xlabel=\# of instances removed,
ylabel=Misclassification rate,
label style={font=\footnotesize},
scaled ticks=false,
tick label style={
	/pgf/number format/fixed,
    /pgf/number format/precision=4,
    font=\scriptsize
},
legend style={font=\fontsize{7}{5}\selectfont},
legend pos=north west
]
\addplot[black, densely dotted, line width=0.5mm] table[x=k, y=baseline, col sep=comma]{./figs/cifar10_miss_all.csv};
\addplot[magenta, mark=diamond*, line width=0.3mm] table[x=k, y=random, col sep=comma]{./figs/cifar10_miss_all.csv};
\addplot[color={rgb:red,2;green,4;blue,2}, line width=0.3mm] table[x=k, y=ae, col sep=comma]{./figs/cifar10_miss_all.csv};
\addplot[color={rgb:red,2;green,4;blue,2}, densely dotted, line width=0.3mm] table[x=k, y=iso, col sep=comma]{./figs/cifar10_miss_all.csv};
\addplot[blue, mark=*, mark size=1.5pt, line width=0.3mm] table[x=k, y=sgd_all, col sep=comma]{./figs/cifar10_miss_all.csv};
\addplot[blue, mark options=solid, densely dotted, mark=triangle*, mark size=2pt, line width=0.3mm] table[x=k, y=sgd_last, col sep=comma]{./figs/cifar10_miss_all.csv};
\addplot[red, mark=square*, mark size=1.5pt, line width=0.3mm] table[x=k, y=icml, col sep=comma]{./figs/cifar10_miss_all.csv};
\addplot[black, densely dotted, line width=0.5mm] table[x=k, y=baseline, col sep=comma]{./figs/cifar10_miss_all.csv};
\end{axis}
\node [below=1cm] at (axis1.south) {\footnotesize (a) CIFAR10: Retrain Last};
\node [below=1cm] at (axis2.south) {\footnotesize (b) CIFAR10: Retrain All};
\end{tikzpicture}
\caption{CIFAR10: Average misclassification rates on the test set after data cleansing over 30 experiments.}
\label{fig:app_mis_cifar10}
%\end{figure}
%
%\begin{figure}
\vspace{12pt}
\centering
\begin{tikzpicture}
\node [anchor=center] at (0,0) [draw,minimum width=9cm,minimum height=0.5cm] {};
\node at (-2.5, 0) {\footnotesize No Removal};
\draw[dashed] (-4.0, 0) -- (-3.5, 0);
\node at (0.5, 0) {\footnotesize Retrain All};
\draw[red] (-1.0, 0) -- (-0.5, 0);
\fill[red] (-0.75,0.12) -- (-0.65,-0.06) -- (-0.85,-0.06) -- cycle;
\node at (3.5, 0) {\footnotesize Retrain Last};
\draw[blue] (2.0, 0) -- (2.5, 0);
\fill [blue] (2.25, 0) circle [radius=0.08];
\end{tikzpicture}
\\
  \def\xs{0}%
  \def\ys{0}%
  \def\key{random}%
  \def\leg{(a) Random}%
  \begin{tikzpicture}
\begin{axis}[
name=axis,
xshift=\xs cm,
yshift=\ys cm,
width=0.45\textwidth,
height=120pt,
xmode=log,
xmin=1,
xmax=10000,
ymin=0.145,
ymax=0.19,
xlabel=\# of instances removed,
ylabel=Misclassification rate,
label style={font=\footnotesize},
scaled ticks=false,
tick label style={
	/pgf/number format/fixed,
    /pgf/number format/precision=4,
    font=\scriptsize
},
legend style={font=\fontsize{7}{5}\selectfont},
legend style={at={(0,1.08)},anchor=south,legend columns=-1}
]
\addplot[name path=b_upper,draw=none,forget plot] table[x=k,y expr=\thisrow{baseline}+\thisrow{baseline_std}, col sep=comma] {./figs/cifar10_miss.csv};
\addplot[name path=b_lower,draw=none,forget plot] table[x=k,y expr=\thisrow{baseline}-\thisrow{baseline_std}, col sep=comma] {./figs/cifar10_miss.csv};
\addplot [fill=black!30,opacity=0.3,forget plot] fill between[of=b_upper and b_lower];

\addplot[name path=all_upper,draw=none,forget plot] table[x=k,y expr=\thisrow{\key}+\thisrow{\key_std}, col sep=comma] {./figs/cifar10_miss_all.csv};
\addplot[name path=all_lower,draw=none,forget plot] table[x=k,y expr=\thisrow{\key}-\thisrow{\key_std}, col sep=comma] {./figs/cifar10_miss_all.csv};
\addplot [fill=red!30,opacity=0.7,forget plot] fill between[of=all_upper and all_lower];

\addplot[name path=last_upper,draw=none,forget plot] table[x=k,y expr=\thisrow{\key}+\thisrow{\key_std}, col sep=comma] {./figs/cifar10_miss.csv};
\addplot[name path=last_lower,draw=none,forget plot] table[x=k,y expr=\thisrow{\key}-\thisrow{\key_std}, col sep=comma] {./figs/cifar10_miss.csv};
\addplot [fill=blue!30,opacity=0.5,forget plot] fill between[of=last_upper and last_lower];

\addplot[black, dashed] table[x=k, y=baseline, col sep=comma]{./figs/cifar10_miss.csv};
\addplot[red, mark=triangle*, mark size=3pt] table[x=k, y=\key, col sep=comma]{./figs/cifar10_miss_all.csv};
\addplot[blue, mark=*, mark size=2pt] table[x=k, y=\key, col sep=comma]{./figs/cifar10_miss.csv};
\end{axis}
\node[below=1.1cm of axis.south] {\leg};
\end{tikzpicture}%

  \def\xs{0}%
  \def\ys{0}%
  \def\key{icml}%
  \def\leg{(b) K\&L}%
  \begin{tikzpicture}
\begin{axis}[
name=axis,
xshift=\xs cm,
yshift=\ys cm,
width=0.45\textwidth,
height=120pt,
xmode=log,
xmin=1,
xmax=10000,
ymin=0.145,
ymax=0.19,
xlabel=\# of instances removed,
ylabel=Misclassification rate,
label style={font=\footnotesize},
scaled ticks=false,
tick label style={
	/pgf/number format/fixed,
    /pgf/number format/precision=4,
    font=\scriptsize
},
legend style={font=\fontsize{7}{5}\selectfont},
legend style={at={(0,1.08)},anchor=south,legend columns=-1}
]
\addplot[name path=b_upper,draw=none,forget plot] table[x=k,y expr=\thisrow{baseline}+\thisrow{baseline_std}, col sep=comma] {./figs/cifar10_miss.csv};
\addplot[name path=b_lower,draw=none,forget plot] table[x=k,y expr=\thisrow{baseline}-\thisrow{baseline_std}, col sep=comma] {./figs/cifar10_miss.csv};
\addplot [fill=black!30,opacity=0.3,forget plot] fill between[of=b_upper and b_lower];

\addplot[name path=all_upper,draw=none,forget plot] table[x=k,y expr=\thisrow{\key}+\thisrow{\key_std}, col sep=comma] {./figs/cifar10_miss_all.csv};
\addplot[name path=all_lower,draw=none,forget plot] table[x=k,y expr=\thisrow{\key}-\thisrow{\key_std}, col sep=comma] {./figs/cifar10_miss_all.csv};
\addplot [fill=red!30,opacity=0.7,forget plot] fill between[of=all_upper and all_lower];

\addplot[name path=last_upper,draw=none,forget plot] table[x=k,y expr=\thisrow{\key}+\thisrow{\key_std}, col sep=comma] {./figs/cifar10_miss.csv};
\addplot[name path=last_lower,draw=none,forget plot] table[x=k,y expr=\thisrow{\key}-\thisrow{\key_std}, col sep=comma] {./figs/cifar10_miss.csv};
\addplot [fill=blue!30,opacity=0.5,forget plot] fill between[of=last_upper and last_lower];

\addplot[black, dashed] table[x=k, y=baseline, col sep=comma]{./figs/cifar10_miss.csv};
\addplot[red, mark=triangle*, mark size=3pt] table[x=k, y=\key, col sep=comma]{./figs/cifar10_miss_all.csv};
\addplot[blue, mark=*, mark size=2pt] table[x=k, y=\key, col sep=comma]{./figs/cifar10_miss.csv};
\end{axis}
\node[below=1.1cm of axis.south] {\leg};
\end{tikzpicture}%
\\
  \def\xs{0}%
  \def\ys{0}%
  \def\key{ae}%
  \def\leg{(c) Autoencoder}%
  \begin{tikzpicture}
\begin{axis}[
name=axis,
xshift=\xs cm,
yshift=\ys cm,
width=0.45\textwidth,
height=120pt,
xmode=log,
xmin=1,
xmax=10000,
ymin=0.145,
ymax=0.19,
xlabel=\# of instances removed,
ylabel=Misclassification rate,
label style={font=\footnotesize},
scaled ticks=false,
tick label style={
	/pgf/number format/fixed,
    /pgf/number format/precision=4,
    font=\scriptsize
},
legend style={font=\fontsize{7}{5}\selectfont},
legend style={at={(0,1.08)},anchor=south,legend columns=-1}
]
\addplot[name path=b_upper,draw=none,forget plot] table[x=k,y expr=\thisrow{baseline}+\thisrow{baseline_std}, col sep=comma] {./figs/cifar10_miss.csv};
\addplot[name path=b_lower,draw=none,forget plot] table[x=k,y expr=\thisrow{baseline}-\thisrow{baseline_std}, col sep=comma] {./figs/cifar10_miss.csv};
\addplot [fill=black!30,opacity=0.3,forget plot] fill between[of=b_upper and b_lower];

\addplot[name path=all_upper,draw=none,forget plot] table[x=k,y expr=\thisrow{\key}+\thisrow{\key_std}, col sep=comma] {./figs/cifar10_miss_all.csv};
\addplot[name path=all_lower,draw=none,forget plot] table[x=k,y expr=\thisrow{\key}-\thisrow{\key_std}, col sep=comma] {./figs/cifar10_miss_all.csv};
\addplot [fill=red!30,opacity=0.7,forget plot] fill between[of=all_upper and all_lower];

\addplot[name path=last_upper,draw=none,forget plot] table[x=k,y expr=\thisrow{\key}+\thisrow{\key_std}, col sep=comma] {./figs/cifar10_miss.csv};
\addplot[name path=last_lower,draw=none,forget plot] table[x=k,y expr=\thisrow{\key}-\thisrow{\key_std}, col sep=comma] {./figs/cifar10_miss.csv};
\addplot [fill=blue!30,opacity=0.5,forget plot] fill between[of=last_upper and last_lower];

\addplot[black, dashed] table[x=k, y=baseline, col sep=comma]{./figs/cifar10_miss.csv};
\addplot[red, mark=triangle*, mark size=3pt] table[x=k, y=\key, col sep=comma]{./figs/cifar10_miss_all.csv};
\addplot[blue, mark=*, mark size=2pt] table[x=k, y=\key, col sep=comma]{./figs/cifar10_miss.csv};
\end{axis}
\node[below=1.1cm of axis.south] {\leg};
\end{tikzpicture}%

  \def\xs{0}%
  \def\ys{0}%
  \def\key{iso}%
  \def\leg{(d) Isolation Forest}%
  \begin{tikzpicture}
\begin{axis}[
name=axis,
xshift=\xs cm,
yshift=\ys cm,
width=0.45\textwidth,
height=120pt,
xmode=log,
xmin=1,
xmax=10000,
ymin=0.145,
ymax=0.19,
xlabel=\# of instances removed,
ylabel=Misclassification rate,
label style={font=\footnotesize},
scaled ticks=false,
tick label style={
	/pgf/number format/fixed,
    /pgf/number format/precision=4,
    font=\scriptsize
},
legend style={font=\fontsize{7}{5}\selectfont},
legend style={at={(0,1.08)},anchor=south,legend columns=-1}
]
\addplot[name path=b_upper,draw=none,forget plot] table[x=k,y expr=\thisrow{baseline}+\thisrow{baseline_std}, col sep=comma] {./figs/cifar10_miss.csv};
\addplot[name path=b_lower,draw=none,forget plot] table[x=k,y expr=\thisrow{baseline}-\thisrow{baseline_std}, col sep=comma] {./figs/cifar10_miss.csv};
\addplot [fill=black!30,opacity=0.3,forget plot] fill between[of=b_upper and b_lower];

\addplot[name path=all_upper,draw=none,forget plot] table[x=k,y expr=\thisrow{\key}+\thisrow{\key_std}, col sep=comma] {./figs/cifar10_miss_all.csv};
\addplot[name path=all_lower,draw=none,forget plot] table[x=k,y expr=\thisrow{\key}-\thisrow{\key_std}, col sep=comma] {./figs/cifar10_miss_all.csv};
\addplot [fill=red!30,opacity=0.7,forget plot] fill between[of=all_upper and all_lower];

\addplot[name path=last_upper,draw=none,forget plot] table[x=k,y expr=\thisrow{\key}+\thisrow{\key_std}, col sep=comma] {./figs/cifar10_miss.csv};
\addplot[name path=last_lower,draw=none,forget plot] table[x=k,y expr=\thisrow{\key}-\thisrow{\key_std}, col sep=comma] {./figs/cifar10_miss.csv};
\addplot [fill=blue!30,opacity=0.5,forget plot] fill between[of=last_upper and last_lower];

\addplot[black, dashed] table[x=k, y=baseline, col sep=comma]{./figs/cifar10_miss.csv};
\addplot[red, mark=triangle*, mark size=3pt] table[x=k, y=\key, col sep=comma]{./figs/cifar10_miss_all.csv};
\addplot[blue, mark=*, mark size=2pt] table[x=k, y=\key, col sep=comma]{./figs/cifar10_miss.csv};
\end{axis}
\node[below=1.1cm of axis.south] {\leg};
\end{tikzpicture}%
\\
  \def\xs{0}%
  \def\ys{0}%
  \def\key{sgd_all}%
  \def\leg{(e) Proposed}%
  \begin{tikzpicture}
\begin{axis}[
name=axis,
xshift=\xs cm,
yshift=\ys cm,
width=0.45\textwidth,
height=120pt,
xmode=log,
xmin=1,
xmax=10000,
ymin=0.145,
ymax=0.19,
xlabel=\# of instances removed,
ylabel=Misclassification rate,
label style={font=\footnotesize},
scaled ticks=false,
tick label style={
	/pgf/number format/fixed,
    /pgf/number format/precision=4,
    font=\scriptsize
},
legend style={font=\fontsize{7}{5}\selectfont},
legend style={at={(0,1.08)},anchor=south,legend columns=-1}
]
\addplot[name path=b_upper,draw=none,forget plot] table[x=k,y expr=\thisrow{baseline}+\thisrow{baseline_std}, col sep=comma] {./figs/cifar10_miss.csv};
\addplot[name path=b_lower,draw=none,forget plot] table[x=k,y expr=\thisrow{baseline}-\thisrow{baseline_std}, col sep=comma] {./figs/cifar10_miss.csv};
\addplot [fill=black!30,opacity=0.3,forget plot] fill between[of=b_upper and b_lower];

\addplot[name path=all_upper,draw=none,forget plot] table[x=k,y expr=\thisrow{\key}+\thisrow{\key_std}, col sep=comma] {./figs/cifar10_miss_all.csv};
\addplot[name path=all_lower,draw=none,forget plot] table[x=k,y expr=\thisrow{\key}-\thisrow{\key_std}, col sep=comma] {./figs/cifar10_miss_all.csv};
\addplot [fill=red!30,opacity=0.7,forget plot] fill between[of=all_upper and all_lower];

\addplot[name path=last_upper,draw=none,forget plot] table[x=k,y expr=\thisrow{\key}+\thisrow{\key_std}, col sep=comma] {./figs/cifar10_miss.csv};
\addplot[name path=last_lower,draw=none,forget plot] table[x=k,y expr=\thisrow{\key}-\thisrow{\key_std}, col sep=comma] {./figs/cifar10_miss.csv};
\addplot [fill=blue!30,opacity=0.5,forget plot] fill between[of=last_upper and last_lower];

\addplot[black, dashed] table[x=k, y=baseline, col sep=comma]{./figs/cifar10_miss.csv};
\addplot[red, mark=triangle*, mark size=3pt] table[x=k, y=\key, col sep=comma]{./figs/cifar10_miss_all.csv};
\addplot[blue, mark=*, mark size=2pt] table[x=k, y=\key, col sep=comma]{./figs/cifar10_miss.csv};
\end{axis}
\node[below=1.1cm of axis.south] {\leg};
\end{tikzpicture}%

  \def\xs{0}%
  \def\ys{0}%
  \def\key{sgd_last}%
  \def\leg{(f) Proposed (Approx.)}%
  \begin{tikzpicture}
\begin{axis}[
name=axis,
xshift=\xs cm,
yshift=\ys cm,
width=0.45\textwidth,
height=120pt,
xmode=log,
xmin=1,
xmax=10000,
ymin=0.145,
ymax=0.19,
xlabel=\# of instances removed,
ylabel=Misclassification rate,
label style={font=\footnotesize},
scaled ticks=false,
tick label style={
	/pgf/number format/fixed,
    /pgf/number format/precision=4,
    font=\scriptsize
},
legend style={font=\fontsize{7}{5}\selectfont},
legend style={at={(0,1.08)},anchor=south,legend columns=-1}
]
\addplot[name path=b_upper,draw=none,forget plot] table[x=k,y expr=\thisrow{baseline}+\thisrow{baseline_std}, col sep=comma] {./figs/cifar10_miss.csv};
\addplot[name path=b_lower,draw=none,forget plot] table[x=k,y expr=\thisrow{baseline}-\thisrow{baseline_std}, col sep=comma] {./figs/cifar10_miss.csv};
\addplot [fill=black!30,opacity=0.3,forget plot] fill between[of=b_upper and b_lower];

\addplot[name path=all_upper,draw=none,forget plot] table[x=k,y expr=\thisrow{\key}+\thisrow{\key_std}, col sep=comma] {./figs/cifar10_miss_all.csv};
\addplot[name path=all_lower,draw=none,forget plot] table[x=k,y expr=\thisrow{\key}-\thisrow{\key_std}, col sep=comma] {./figs/cifar10_miss_all.csv};
\addplot [fill=red!30,opacity=0.7,forget plot] fill between[of=all_upper and all_lower];

\addplot[name path=last_upper,draw=none,forget plot] table[x=k,y expr=\thisrow{\key}+\thisrow{\key_std}, col sep=comma] {./figs/cifar10_miss.csv};
\addplot[name path=last_lower,draw=none,forget plot] table[x=k,y expr=\thisrow{\key}-\thisrow{\key_std}, col sep=comma] {./figs/cifar10_miss.csv};
\addplot [fill=blue!30,opacity=0.5,forget plot] fill between[of=last_upper and last_lower];

\addplot[black, dashed] table[x=k, y=baseline, col sep=comma]{./figs/cifar10_miss.csv};
\addplot[red, mark=triangle*, mark size=3pt] table[x=k, y=\key, col sep=comma]{./figs/cifar10_miss_all.csv};
\addplot[blue, mark=*, mark size=2pt] table[x=k, y=\key, col sep=comma]{./figs/cifar10_miss.csv};
\end{axis}
\node[below=1.1cm of axis.south] {\leg};
\end{tikzpicture}%

\caption{Exhaustive results on CIFAR10: [Thick lines] Average misclassification rates on the test set after data cleansing over 30 experiments. [Shaded Regions] Average $\pm$ standard deviation.}
\label{fig:app_mis_cifar10_exhaustive}
\end{figure}

\begin{figure}[t]
\begin{tikzpicture}
\centering
\begin{axis}[
name=axis1,
axis equal,
width=160pt,
height=160pt,
xmin=0.005,
xmax=0.016,
ymin=0.005,
ymax=0.016,
xlabel style={align=center},
xlabel=Misclassification rate\\ No Removal,
ylabel style={align=center},
ylabel=Misclassification rate\\ Proposed,
label style={font=\footnotesize},
scaled ticks=false,
tick label style={
	/pgf/number format/fixed,
    /pgf/number format/precision=4,
    font=\scriptsize
},
]
\addplot[blue, mark=*, only marks] table[x=baseline, y=sgd_all, col sep=comma]{./figs/mnist_compare_m0100.csv};
\addplot [black, dashed, domain=0.0:1.0, samples=3] {x};
\end{axis}
\begin{axis}[
name=axis2,
xshift=2.7in,
axis equal,
width=160pt,
height=160pt,
xmin=0.14,
xmax=0.20,
ymin=0.14,
ymax=0.20,
xlabel style={align=center},
xlabel=Misclassification rate\\ No Removal,
ylabel style={align=center},
ylabel=Misclassification rate\\ Proposed,
label style={font=\footnotesize},
scaled ticks=false,
tick label style={
	/pgf/number format/fixed,
    /pgf/number format/precision=4,
    font=\scriptsize
},
]
\addplot[blue, mark=*, only marks] table[x=baseline, y=sgd_all, col sep=comma]{./figs/cifar10_compare_m0100.csv};
\addplot [black, dashed, domain=0.0:1.0, samples=3] {x};
\end{axis}
\node [below=1.5cm of axis1.south] {(a) MNIST};
\node [below=1.5cm of axis2.south] {(b) CIFAR10};
\end{tikzpicture}
\caption{Comparison of the misclassification rates before and after the data cleansing with the proposed method. We set the number of removed instances to be 100 both for MNIST and CIFAR10.}
\label{fig:compare}
\end{figure}

\begin{figure}[t]
    \centering
    \subfigure[Autoencoder]{\includegraphics[width=0.49\textwidth]{./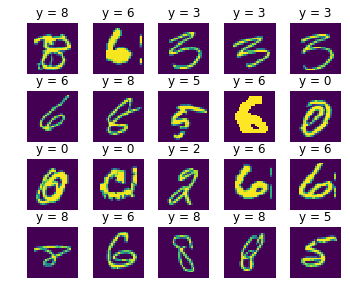}}
    \subfigure[Isolation Forest]{\includegraphics[width=0.49\textwidth]{./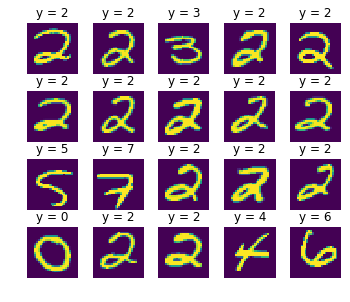}}
    \subfigure[Proposed]{\includegraphics[width=0.49\textwidth]{./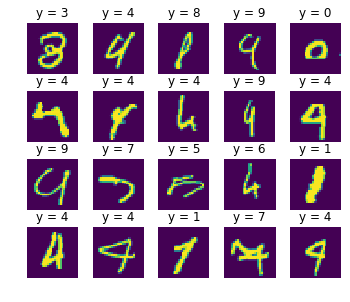}}
    \subfigure[Proposed (Approx.)]{\includegraphics[width=0.49\textwidth]{./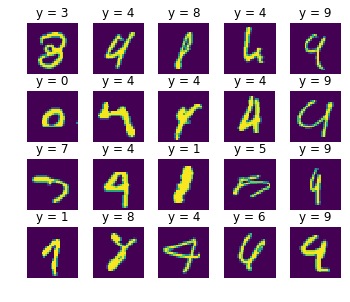}}
    \subfigure[K\&L~\citep{koh2017understanding}]{\includegraphics[width=0.49\textwidth]{./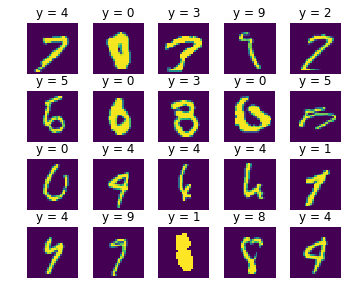}}
    \caption{Examples of found top-20 influential instances in MNIST}
    \label{fig:example_mnist}
\end{figure}

\begin{figure}[t]
    \centering
    \subfigure[Autoencoder]{\includegraphics[width=0.49\textwidth]{./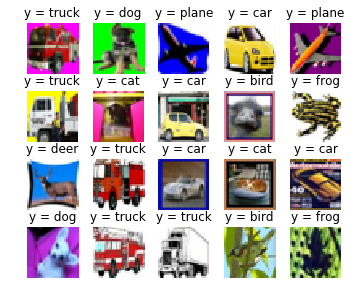}}
    \subfigure[Isolation Forest]{\includegraphics[width=0.49\textwidth]{./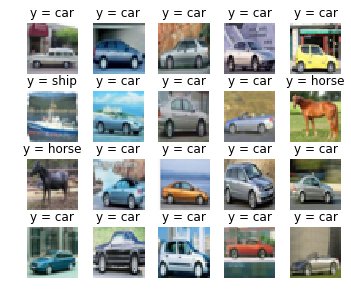}}
    \subfigure[Proposed]{\includegraphics[width=0.49\textwidth]{./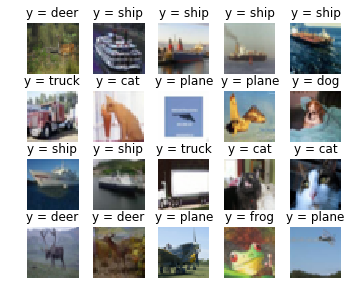}}
    \subfigure[Proposed (Approx.)]{\includegraphics[width=0.49\textwidth]{./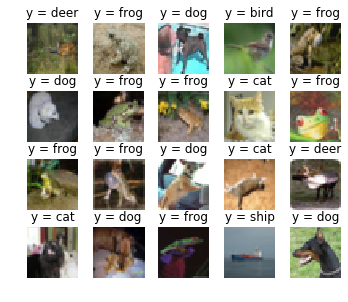}}
    \subfigure[K\&L~\citep{koh2017understanding}]{\includegraphics[width=0.49\textwidth]{./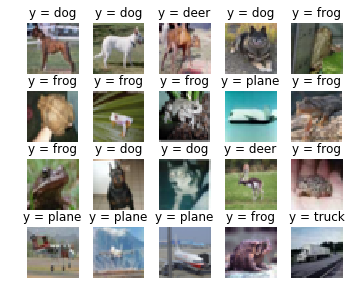}}
    \caption{Examples of found top-20 influential instances in CIFAR10}
    \label{fig:example_cifar10}
\end{figure}

\end{document}